\newcommand*\iftodonotes{\if@todonotes@disabled\expandafter\@secondoftwo\else\expandafter\@firstoftwo\fi}  %
\definecolor{dandelion}{HTML}{FFD464}
\definecolor{bittersweet}{HTML}{C04F17}
\definecolor{mintgreen}{RGB}{152, 255, 152}
\newcommand{\ethz}{\text{\normalfont \textipa{D}}}
\newcommand{\uchic}{\normalfont \text{\textipa{@}}}
\newcommand{\cambridge}{\normalfont \text{\textipa{N}}}
\newcommand{\copenhagen}{\normalfont \text{\textipa{R}}}
\crefname{section}{\S}{\S\S}
\Crefname{section}{\S}{\S\S}
\crefname{table}{Tab.}{}
\crefname{figure}{Fig.}{}
\crefname{algorithm}{Algorithm}{}
\crefname{equation}{Eq.}{Eqs.}  %
\crefname{appendix}{App.}{}
\crefname{thm}{Theorem}{Theorems}
\crefname{prop}{Proposition}{Propositions}
\crefname{cor}{Corollary}{Corollaries}
\crefname{observation}{Observation}{Observations}
\crefname{assumption}{Assumption}{Assumptions}
\theoremstyle{definition}
\theoremstyle{definition}
\newtheorem{cor}{Corollary}
\newcommand{\defn}[1]{{\textbf{#1}}}
\newcommand{\defequals}{\triangleq}
\newcommand{\mtrian}{\mathrel{\raisebox{-0.1ex}{%
\scalebox{0.8}[0.6]{$\vartriangle$}}}}
\newcommand{\defpropto}{\overset{\mtrian}{\propto}}
\newcommand{\ent}{\mathrm{H}}
\def\calX{{\mathcal{X}}}
\def\calY{{\mathcal{Y}}}
\definecolor{MyTawny}{HTML}{d55e00} %
\definecolor{MyGreen}{HTML}{029e73}
\definecolor{MyBlue}{HTML}{0173b2}
\definecolor{MyOrange}{HTML}{de8f05}
\definecolor{MyBronze}{HTML}{ca9161}
\definecolor{MySilver}{HTML}{949494}
\definecolor{MyRed}{HTML}{b40426}
\definecolor{MyInsignificantBlue}{HTML}{3b4cc0}
\newcommand{\querytext}[1]{{\color{MyGreen} \textit{#1}}}
\newcommand{\contexttext}[1]{{\color{MyOrange} \textit{#1}}}
\newcommand{\entitytext}[1]{{\color{MyBlue}{\textit{#1}}}}
\newcommand{\entityexample}[1]{{\color{MyBlue}{\textit{\textbf{#1}}}}}
\newcommand{\realtext}[1]{{\color{MyBronze} \textit{#1}}}
\newcommand{\faketext}[1]{{\color{MySilver} \textit{#1}}}
\newcommand{\othertext}[1]{{\color{MyGreen} \textit{#1}}}
\newcommand{\opentext}[1]{{\color{MyGreen} \textit{#1}}}
\newcommand{\closedtext}[1]{{\color{MyTawny} \textit{#1}}}
\newcommand{\pscore}{\emph{persuasion score}\xspace}
\newcommand{\stickyscore}{\emph{susceptibility score}\xspace}
\newcommand{\context}{{\color{MyOrange}{c}}}
\newcommand{\rvContext}{{\color{MyOrange}{C}}}
\newcommand{\rvAnswer}{{\color{MyTawny}{A}}}
\newcommand{\rvEntity}{{\color{MyBlue}{E}}}
\newcommand{\alphabet}{\Sigma}
\newcommand{\entity}{{\color{MyBlue}{e}}}
\newcommand{\entities}{{\color{MyBlue}{\mathcal{E}}}}
\newcommand{\lm}{p_{\textsc{m}}}
\newcommand{\corpus}{\mathcal{D}}
\newcommand{\answer}{{\color{MyTawny}a}}
\newcommand{\query}{{\color{MyGreen}{q}}}
\newcommand{\queries}{{\color{MyGreen}{Q}}}
\newcommand{\stickyscoresymb}{\chi}
\newcommand{\pscoresymb}{\psi}
\newcommand{\confusionscoresymb}{\kappa}
\newcommand{\hpmi}{\mathrm{HPMI}}
\newcommand{\entropy}{\mathrm{H}}
\newcommand{\MI}{\mathrm{I}}
\DeclareMathOperator*{\expect}{\mathbb{E}}
\definecolor{red_fig}{HTML}{D95847}
\definecolor{blue_fig}{HTML}{5D7CE6}
\newcommand{\numRelations}{122\xspace}
\newtheorem{prop}{Proposition}
\crefname{prop}{Proposition}{}
\title{Context versus Prior Knowledge in Language Models}
\author{
Kevin Du$^{\ethz}$~\;~
Vésteinn Snæbjarnarson$^{\copenhagen}$~\;~ 
Niklas Stoehr$^{\ethz}$~\;~
\\
\textbf{Jennifer C. White}$^{\cambridge}$~\;~
\textbf{Aaron Schein}$^{\uchic}$~\;~
\textbf{Ryan Cotterell}$^{\ethz}$
\\
$^{\ethz}$ETH Z{\"u}rich \quad $^{\copenhagen}$University of Copenhagen \quad \\
$^{\cambridge}$University of Cambridge \quad $^{\uchic}$The University of Chicago
\\
\href{mailto:kevidu@ethz.ch}{\texttt{kevin.du@inf.ethz.ch}}~\;~ 
\href{mailto:vesteinn.snaebjarnarson@gmail.com}{\texttt{vesn@di.ku.dk }}~\;~ 
\href{mailto:niklas.stoehr@inf.ethz.ch}{\texttt{niklas.stoehr@inf.ethz.ch}}\\
\href{mailto:jw2088@cam.ac.uk}{\texttt{jw2088@cam.ac.uk}}~\;~ 
\href{mailto:schein@uchicago.edu}{\texttt{schein@uchicago.edu}}~\;~ 
\href{mailto:ryan.cotterell@inf.ethz.ch}{\texttt{ryan.cotterell@inf.ethz.ch}}
}
\begin{document}
\maketitle
\begin{abstract}
To answer a question, language models often need to integrate prior knowledge learned during pretraining and new information presented in context.
We hypothesize that models perform this integration in a predictable way across different questions and contexts: models will rely more on prior knowledge for questions about entities (e.g., persons, places, etc.) that they are more familiar with due to higher exposure in the training corpus, and be more easily persuaded by some contexts than others.
To formalize this problem, we propose two mutual information-based metrics to measure a model's dependency on a context and on its prior about an entity: first, the \pscore of a given context represents how much a model depends on the context in its decision, and second, the \stickyscore of a given entity represents how much the model can be swayed away from its original answer distribution about an entity.
We empirically test our metrics for their validity and reliability.
Finally, we explore and find a relationship between the scores and the model's expected familiarity with an entity, and provide two use cases to illustrate their benefits.

\vspace{0.5em}
\hspace{.5em}\includegraphics[width=1.25em,height=1.15em]{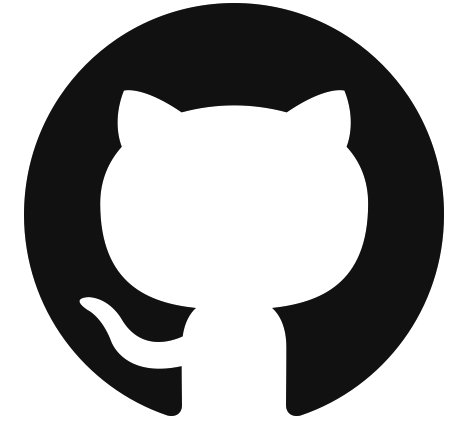}\hspace{.75em}
\parbox{\dimexpr\linewidth-7\fboxsep-7\fboxrule}{\url{https://github.com/kdu4108/measureLM}}
\vspace{-.5em}
\end{abstract}

\section{Introduction}
\label{sec:introduction}
Language models have displayed remarkable abilities to answer factual queries about entities, suggesting that they encode knowledge about these entities learned during pretraining \citep{petroni_language_2019, brown_language_2020, roberts_how_2020, geva_transformer_2021}.
For prompts that extend a question with additional information or context, the model can draw on both its prior knowledge and the additional context to answer the query \citep{kwiatkowski_natural_2019, joshi_triviaqa_2017, berant_semantic_2013, kasai_realtime_2022}.
While previous research has investigated how often a model will rely on prior knowledge over conflicting contextual information in answering questions \citep{longpre_entity-based_2022}, we hypothesize that models will not behave identically for all contexts and entities.
For example, if a language model is prompted with \contexttext{Harry hugged Voldemort.} \querytext{How friendly are \entityexample{Harry Potter} and \entityexample{Lord Voldemort}?}, we might expect the prior knowledge learned from training data describing the rivalry between these two characters to significantly influence the model's answer.
However, if the model lacks a strong prior on, say, \entityexample{Susie} and \entityexample{Alia}, then we might expect its answer to be primarily context-driven when prompted with \contexttext{Susie hugged Alia.} \querytext{How friendly are \entityexample{Susie} and \entityexample{Alia}?}.

\begin{figure}[t]
    \centering    \includegraphics[width=\columnwidth]{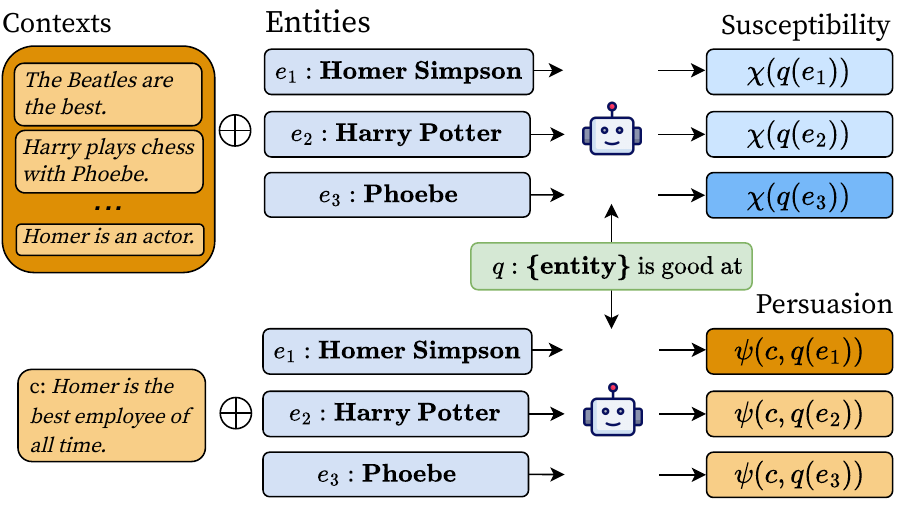}
    \vspace{-20pt}
    \caption{
    In answering a given query, a model may be more \emph{\entitytext{susceptible}} to context for some \entitytext{entities} than others, while some \contexttext{contexts} may be more \emph{\contexttext{persuasive}} than others (as indicated in this figure by color darkness in the rightmost column).
    We introduce mutual information-based metrics to evaluate how much impact the context has relative to the prior knowledge of a model.}
    \label{fig:model-valid}
\vspace{-15pt}
\end{figure}

We formalize this problem through the lens of evaluating the change in a model's answer distribution for different contexts and entities.
We present two mutual information-based metrics that allow us to explore differences in the effect of specific contexts on model behavior for different entities. 
The \defn{persuasion score} of a given context measures how much a model's answer distribution is affected by the context when prompted with a particular query about a given entity.
The \defn{susceptibility score} of a given entity measures how much the model's answer distribution can be swayed for a particular query about that entity, marginalized over all contexts.
Given their basis in mutual information, our metrics are natural operationalizations of persuasion and susceptibility. 
Furthermore, we offer empirical evidence of the validity and reliability of these measures by comparing them against similar metrics and showing their robustness to paraphrases and different samples.

To study how language models behave for different contexts and entities, we create a synthetic dataset of queries covering \numRelations topic areas extracted from the YAGO knowledge graph \citep{yago2007}, entities extracted from YAGO and generated with GPT-4 \cite{openai_gpt4_2023}, and contexts constructed with different qualities, e.g., relevancy and assertiveness. 
We apply our new metrics to Pythia models ranging from 70m to 12b parameters \citep{biderman2023pythia} to find evidence that relevant contexts are consistently more persuasive than irrelevant ones, and assertive contexts are more persuasive than less assertive ones for yes--no questions.
In a deep dive into one model, we find evidence that entities with high expected familiarity, as measured by both training data frequency and entity degree statistics in the YAGO knowledge graph, have lower susceptibility scores.

We further conduct case studies to show how these metrics could be useful in applied settings.
In a study on friend--enemy stance measurement, we find evidence that enemy duos are less susceptible than friend duos.
Applying our metrics to gender bias, we find evidence for a difference in susceptibility between stereotypically masculine and feminine names for gender-biased contexts. 
Through this, we show how our proposed metrics can be used to better analyze the effects of context and prior knowledge, with the potential for application toward greater control over model behavior.

\section{Context and Prior Knowledge}
\label{sec:related-work}

Much prior work has noted that language models develop biases about entities during training and investigated the tension between this entity bias and additional information provided in the context.

\subsection{Entity Bias}
Loosely defined across studies as model bias from spurious correlations between entity mentions and a target characteristic in the training data, \defn{entity bias} has been mainly examined in relation extraction (RE), where a model extracts relationships between entities from text \citep{zelenko_kernel_2002, zhou_exploring_2005}.
Different studies on RE have attempted to either mitigate \citep{zhang_2018_tacred, zhang_graph_2018, peng_learning_2020, wang_should_2022} or leverage \citep{yamada_2020_luke, zhou_chen_2022} entity bias for improved performance on the task.
\citet{wang_should_2022} note that entities can carry both useful semantic information about their roles, e.g., whether the entity is a person or a place, and spurious information which can bias the model toward relations not mentioned in the target sentence, e.g., the model may link \entityexample{Switzerland} with the relation \querytext{countries\_of\_residence} at inference time for a sentence that does not mention this relation, due to their frequent co-occurrence in the training data.
Additional studies on machine reading comprehension (MRC) tasks, e.g., SQuAD \citep{rajpurkar_etal_2016_squad}, TriviaQA \citep{joshi_triviaqa_2017}, and NaturalQuestions \citep{kwiatkowski_natural_2019}, have found substituting different entity names can result in meaningful changes in model predictions and overall evaluation performance \citep{yan_robustness_2022}.\looseness=-1

\subsection{Context and Entity Bias}
The existence of entity bias naturally raises the question of how it interacts with context to shape a model's response.
Several papers \citep{longpre_entity-based_2022, chen-etal-2022-rich, xie_adaptive_2023} approach this by inducing and exploring \defn{knowledge conflicts}, where a context preceding a query proposes information that conflicts with a model's prior knowledge about that query.
They measure the model's reliance on pretrained entity bias and context by computing the \defn{memorization ratio}: the proportion of knowledge conflict examples for which the model maintains its answer from prior knowledge.
\citet{pezeshkpour2023measuring} measures a model's prior knowledge of a fact by comparing the entropy of a queried model's answer distribution before and after stating the fact in context.
Several studies further propose interventions to reduce entity bias and favor in-context information.
These approaches include prompting 
\citep{zhou_context-faithful_2023, onoe_can_2023}, modifying training data \citep{wang_causal_2023}, fine-tuning \citep{li_large_2022}, and neuron-level interventions \citep{yu_characterizing_2023} at inference time.\looseness=-1

However, the metrics used to quantify model reliance on context and entity bias in these papers---excepting \citet{pezeshkpour2023measuring}---are limited in several ways. 
First, most previous work does not develop \emph{entity-specific} or \emph{context-specific} metrics for the strength of the entity bias or context persuasiveness. 
Instead, their metrics produce only a single number to summarize the model's overall reliance on entity bias.
Second, their setups are limited to adversarial cases in which a context is chosen to go against the entity bias. 
Indeed, we may well wish to measure the interplay of context and entity bias in other cases, such as when context reinforces entity bias or when they do not clearly disagree.
Therefore, we seek to define a metric that measures how much the model depends on a given context or entity with rigorous, theoretically grounded interpretations.\looseness=-1

\section{Our Formalization} 
\label{sec:formalization}

We now formalize the problem setting in which we define metrics for the susceptibility of an entity and the persuasiveness of a context for a given model.
Let $\alphabet$ be an alphabet.
Consider a language model $\lm$ over $\alphabet$, i.e., $\lm$ is a distribution over the Kleene closure $\alphabet^*$.
Furthermore, we assume that $\lm$ was estimated from a corpus $\corpus \subset \alphabet^*$.
Let $\entities \subset \alphabet^*$ be the subset of strings that could correspond to string representations of entities.\footnote{In practice, an entity can have different verbalizations, e.g., \querytext{\entityexample{Sherlock}} and \querytext{\entityexample{Sherlock Holmes}.
}. 
And, whether a specific string refers to an entity may very well depend on the context in which it occurs.
Thus, our set $\entities$ is an approximation at best.}
Let $\queries = \{\query_n\}_{n=1}^N$ be a set of $N$ \defn{slotted query templates} of type $\query_n \colon \entities \rightarrow \Sigma^*$ that fill the slot with the argument to each $\query_n$. 
We can think of a query template $\query \in \queries$ as slotting an entity into a query, e.g., slotting the entity \entityexample{Slovenia} into the query template $\query(\entity) =$ \querytext{The capital of} $\entity$ \querytext{is} produces the string \querytext{The capital of \entityexample{Slovenia} is}.\looseness=-1

Now, consider three random variables.
First, let $\rvContext$ be a $\alphabet^*$-valued random variable that stands for a context.
Second, let $\rvAnswer$ be a $\alphabet^*$-valued random variable whose values are answers.
Let $\rvEntity$ be a $\entities$-valued random variable.
The pushforward $\query(\rvEntity)$, then, is a $\alphabet^*$-valued random variable over slottings of entities according to query $\query \in \queries$.
The random variables $\rvContext$, $\query(\rvEntity)$ and $\rvAnswer$ are jointly distributed according to the following probability distribution
\begin{equation}\label{eq:base-prob}
\!\!\!p(\rvContext = \context, \query(\rvEntity) = \query(\entity),\rvAnswer = \answer) \!\defpropto \lm(\context \query(\entity) \answer),
\end{equation}
where $\context\query(\entity)\answer \in \alphabet^*$ is string concatenation. 
To formalize persuasion and susceptibility, we 
make use of the joint distribution $p$ heavily in the proceeding subsections.\looseness=-1

\subsection{Persuasion Score}\label{sec:persuasion_score}
For each context $\context$ that is prepended to a query template with a given entity slotted in, $\query(\entity)$, we wish to assign a \defn{persuasion score} $\pscoresymb$ to represent how successful that context is at altering a model's answer distribution.
This score depends on the specific queried entity $\entity$, because contexts themselves are often entity-dependent.
Intuitively, a context's persuasion score should measure how much the probability distribution of possible answers changes, averaged across all possible answers.
More precisely, we define our persuasion score $\pscoresymb(\context, \query(\entity))$ as the half-pointwise mutual information (half-PMI) between the context $\context$ and the answer random variable $\rvAnswer$, conditioned on the fixed query about an entity:\looseness=-1
\begin{align}
\begin{split}
        \pscoresymb(&\context, \query(\entity))
    \defequals \MI(\rvContext = \context; \rvAnswer \mid \query(\rvEntity) = \query(\entity)) \label{eq:context_pmi}\\
    &=\sum_{\answer \in \Sigma^*}p(\answer \mid \context, \query(\entity))  \log \frac{p(\answer \mid \context, \query(\entity))}{p (\answer \mid \query(\entity))} \\
    &=\mathrm{KL}(p(\rvAnswer \mid \context, \query(\entity)) \mid\mid p(\rvAnswer \mid \query(\entity))),
 \end{split}
\end{align}
where $p(\rvAnswer \mid \context, \query(\entity))$ and $p(\rvAnswer \mid \query(\entity))$ can be derived by marginalizing and conditioning \cref{eq:base-prob}.
\looseness=-1

The persuasion score of a context can then be interpreted as the degree (in nats) to which the context was able to change the model's answer distribution when prepended to a query.
When the persuasion score is at its lower bound of 0 nats, it indicates the context is completely \emph{unpersuasive}, i.e., it did not change the model's answer distribution at all.
Contexts with higher persuasion scores
change the answer distribution more, which is consistent when viewed through the lens of KL-divergence.\footnote{We include details on further equivalencies of half-PMI and other concepts in information theory in \Cref{app:halfpmi}.}

\subsection{Susceptibility Score}\label{sec:susceptibility_score}
For a given query template applied to an entity $\query(\entity)$, we further wish to assign a susceptibility score $\stickyscoresymb$ to $\query(\entity)$ which represents how easy it is to change a model's answer distribution.
Intuitively, the susceptibility score should measure how much a model's answer distribution to a query changes when prompted with additional context, averaged across all possible contexts and answers.
More precisely, we define the susceptibility score $\stickyscoresymb(\query(\entity))$ as the mutual information between the context and answer random variables, conditioned on a fixed query about an entity:
\begin{align}
\begin{split}
 \stickyscoresymb(\query(\entity)) &\defequals \sum_{\context \in \Sigma^*} p(\context) 
 \pscoresymb(\context, \query(\entity)) \\
 &= \MI(\rvContext; \rvAnswer \mid \query(\rvEntity) = \query(\entity)),
\end{split}
\label{eq:sus_mi}
\end{align}
where $p(\context)$ is the marginal distribution over contexts. 
Equivalent to the difference in entropy $\entropy(\rvAnswer \mid \query(\rvEntity) = \query(\entity)) - \entropy(\rvAnswer \mid \rvContext, \query(\rvEntity) = \query(\entity)) \label{eq:sus_mi_entropy}$, the susceptibility score represents the reduction in answer distribution uncertainty (in nats) when a query is preceded by a context.
A high susceptibility score means the model is highly influenced by context for the query about that entity, with its upper bound of $\entropy(\rvAnswer)$ indicating that context fully determines the answer.
A low score indicates the model's response is robust to context, with its lower bound of $0$ indicating no influence of context on the answer distribution.
We can use susceptibility to answer the question ``How much does the model's answer depend on its entity bias?'' with an information-theoretically grounded, interpretable scale based on the model's full behavior.\looseness=-1

\subsection{Entity-Independent Persuasion Score}\label{sec:confusion_score}
Persuasion scores can be further marginalized over entities. 
Analogous to our definition of the susceptibility score, we define the \defn{entity-independent persuasion score} of a context as how much the log probability distribution of possible answers changes, averaged across all possible entities and answers. 
We describe this further in \Cref{app:confusion_score}. 

\section{Experiments}
We now provide empirical evidence to further validate our metrics, characterize model behavior using the susceptibility and persuasion scores, and investigate reasons behind differences in susceptibility scores for different entities.

\subsection{Setup}
\label{sec:exp_setup}
For each of the \numRelations different relations from the YAGO knowledge graph, we collect 100 entities\footnote{50 are real entities (e.g., \querytext{\entitytext{Adele}}) sampled from YAGO, and 50 are fake entities (e.g., \querytext{\entitytext{Udo König}}) of the same entity class (e.g., \emph{Person}) generated with GPT-4 \citep{openai_gpt4_2023}.} and construct 600 random contexts from relation-specific context templates such that each entity is mentioned in 6 contexts.
The 600 contexts are evenly distributed between \emph{assertive},\footnote{E.g., \contexttext{Definitely, the capital of \{entity\} is \{answer\}.}} \emph{base},\footnote{E.g., \contexttext{The capital of \{entity\} is \{answer\}.}} and \emph{negation}\footnote{E.g., \contexttext{The capital of \{entity\} is not \{answer\}.}} context types.
We construct four query forms of each relation: two \emph{closed} questions, i.e., yes--no, and two \emph{open} questions.

Using these samples, we compute persuasion scores for each context according to \Cref{eq:context_pmi} and susceptibility scores for each entity according to \Cref{eq:sus_mi} for each of the four query forms for six Pythia models of different sizes\footnote{70m, 410m, 1.4b, 2.8b, 6.9b, 12b (8-bit quantized)} trained on the deduplicated Pile \citep{wolf-transformers, dettmers2022llmint8, biderman2023pythia}.
Due to computational constraints, we approximate the model's answer distribution with the next-token distribution over the model's vocabulary.\footnote{While it would be more precise to estimate the answer distribution by repeatedly sampling many model outputs, this is very computationally expensive.}
The detailed setup can be found in \Cref{app:main_exp_setup}.

\subsection{Empirically Validating Our Metrics}
\label{sec:val_scores}

\subsubsection{Estimating Scores}
Because persuasion and susceptibility scores both involve a countably infinite sum, we opt for a stochastic approximation scheme.
Specifically, we construct a Monte Carlo estimator.
We sample from a narrower set of constructed contexts and approximate the answer distribution with the next token distribution over the model's vocabulary, as described in \Cref{sec:exp_setup}. 
We take a sample size of 600 contexts.
While the Monte Carlo approximation itself results in a consistent estimator, the additional approximations mean we do not have a guarantee on the quality of the approximation as a whole. 
In \Cref{fig:var_sus_and_p}, we exhibit the variance of our estimator across three random seeds, i.e., sampled sets of context.\looseness=-1

\subsubsection{Validating Persuasion Scores}
\label{sec:val_persuasion_scores}

\paragraph{Convergent Validity.}
According to existing measurement modeling methods \citep{loevinger_measurement_1957, validity_1987, jackman_measurement_2008, measurement_theory_2009, quinn_measurement_2010, wallach-measurement-2021}, observing a relationship between a new metric and existing ones would serve as additional evidence that the metric is meaningful.
To this end, we explore whether contexts with higher persuasion scores tend to more successfully convince the model to agree with the context.
Using the Pythia-6.9b-deduped model, we generate an answer for each prepended context to a query--entity pair from \Cref{sec:exp_setup} and use simple string matching to map the answer to whether it agrees with the context, the original answer, or neither.
We then apply a permutation test ($k=10000, \alpha=0.05$ with the Benjamini--Hochberg (BH) correction \citep{bh_correction}) for whether contexts that elicited context-concordant answers have higher persuasion scores than those that did not alter the model from the original answer for each of the 122 query topics.
Within a query topic, we run separate tests for open and closed queries, since the entropy of the answer distribution for closed queries tends to be much lower than the entropy for open queries.
We find that for $59\%$ of open queries, contexts that persuade the model to output the in-context answer have significantly higher persuasion scores than non-persuasive ones.
Curiously, this behavior holds for only $34\%$ of closed queries.
We discuss this surprising result more in \Cref{sec:discussion}.
We further show a summary of the mean persuasion scores for the different kinds of elicited answers in \Cref{fig:pscore_mr}.

\begin{figure}
    \centering
    \includegraphics[width=\columnwidth]{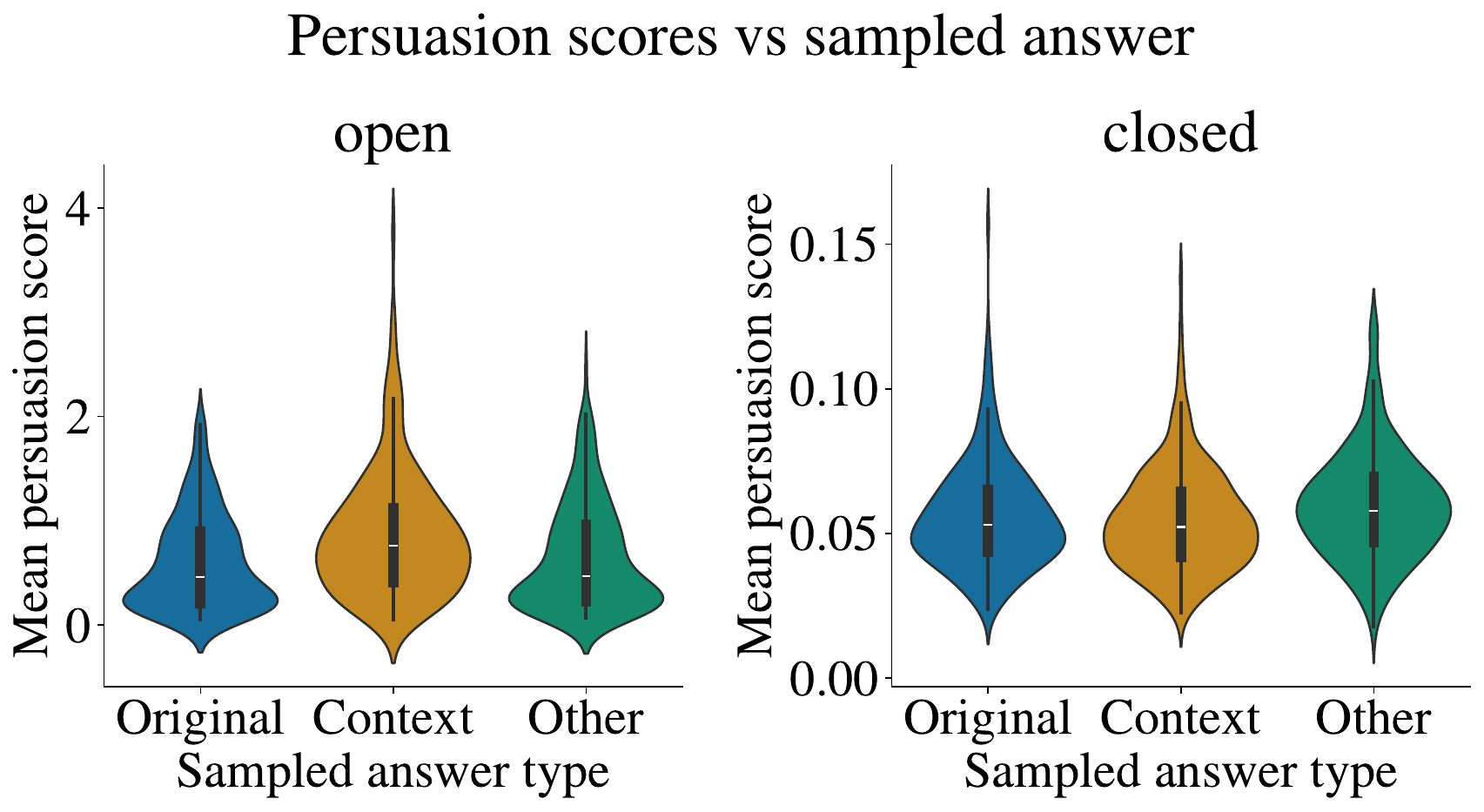}
    \caption{The \emph{x}-axis represents bins for whether the model's answer agreed with its prior, the context, or neither.
    For open (left) queries, the persuasion scores of contexts that persuaded the model to output an answer matching the context (\contexttext{Context}) are higher than those of contexts that did not (\entitytext{Original}, \othertext{Other}).}
    \label{fig:pscore_mr}
\end{figure}

\paragraph{Construct Reliability.}
To show construct reliability, we provide evidence that persuasion scores do not strongly vary for inputs that differ only in phrasing, i.e., \emph{query form}.
For example, the persuasion score of \contexttext{The capital of Slovenia is Oz.} should be similar when prepended to either 
\querytext{Is \entityexample{Slovenia}'s capital Ljubljana?} or \querytext{Is Ljubljana the capital of \entityexample{Slovenia}?}.
We compute persuasion scores using the setup from \Cref{sec:exp_setup} with two query forms for both closed and open queries, keeping contexts that appeared for the same query and entity for all seeds.
We then compute the variance across the query forms to test for reliability.
\Cref{fig:var_sus_and_p} shows strong evidence that the metric is reliable.
The variance is very low across different closed query forms, as expected.
While open query forms have higher variance, this is not unexpected because the question-answering query form, e.g., \querytext{Q: What is the capital of \entityexample{Slovenia}?\textbackslash nA:} is more specific than the sentence-completion form, e.g., \querytext{The capital of \entityexample{Slovenia} is}, which has a broader set of plausible answers.\looseness=-1

\subsubsection{Validating Susceptibility Scores}
\label{sec:val_sus_scores}

\paragraph{Convergent Validity.}

We compare susceptibility scores to per-entity memorization ratio (MR)\footnote{Adapted from \citet{longpre_entity-based_2022} to apply on a \emph{per-entity} basis and describe it in more detail in \Cref{app:mr}.} as further evidence for the meaningfulness of our metric, using the setup from \Cref{sec:exp_setup} for the Pythia-6.9b-deduped model.
\Cref{fig:sus_mr} shows a decreasing upper-bounding relationship between susceptibility scores and MR for both open queries and closed queries.
While not a straightforward correlation, this pattern supports the convergent validity of susceptibility scores and can be explained by the scores' nature of measuring a difference in entropy.
That is, if the model's answer was often changed (low MR), this could correspond to a wide range of entropy change (low or high susceptibility), depending on the model's confidence without the context. If the model's answer was mostly unchanged (high MR), the entropy likely remained similar (low susceptibility).
We discuss these results further in \Cref{app:convergent}.\looseness=-1

\begin{figure}[t]
    \centering    
    \includegraphics[width=\columnwidth]{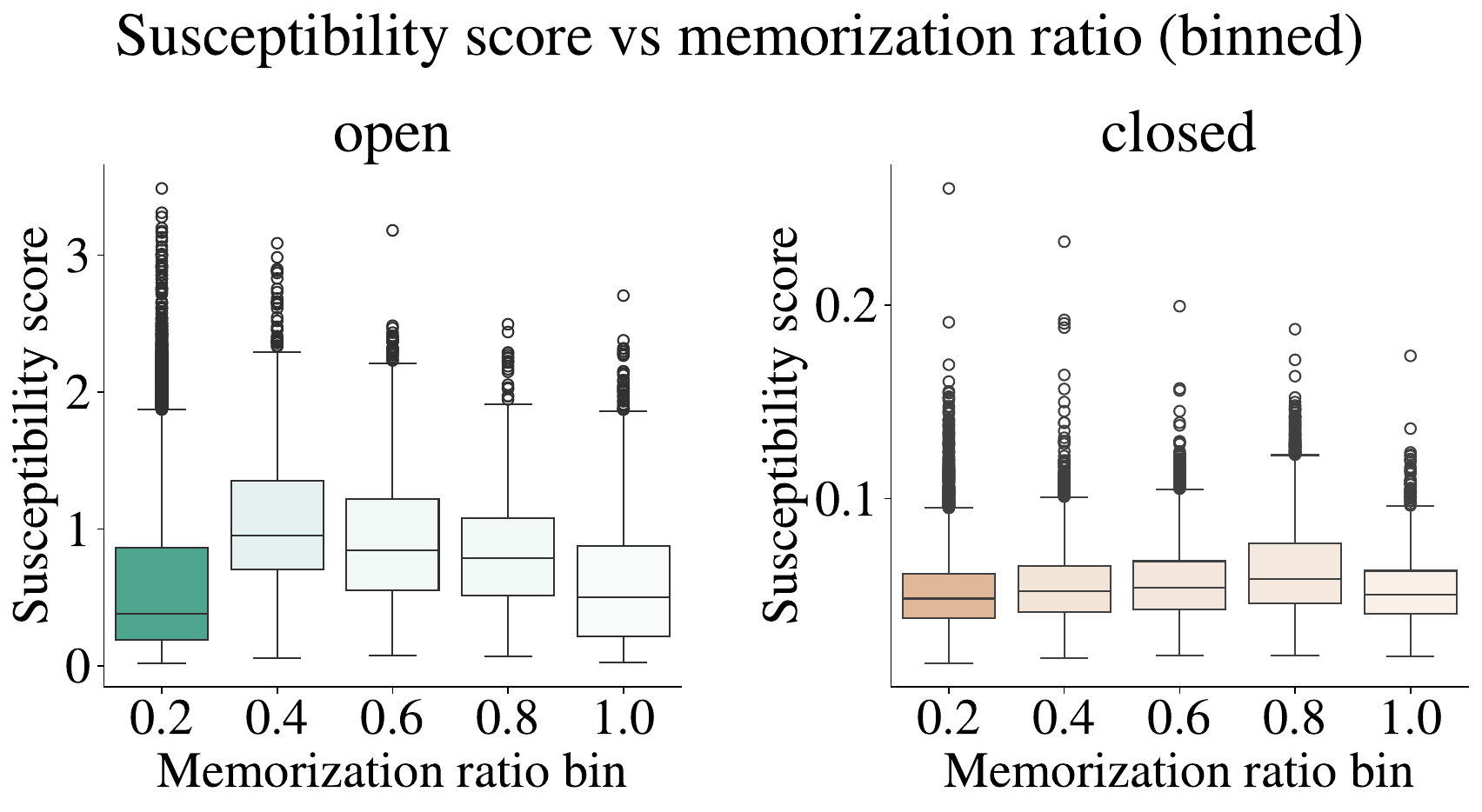}
    \caption{Susceptibility score  ($y$-axis) against the MR divided into 5 bins between 0 and 1 ($x$-axis) for all entities and queries. 
    The opacity represents the proportion of points in each bin.
    For both \opentext{open queries} (\opentext{$\bullet$}) and \closedtext{closed queries} (\closedtext{$\bullet$}), we see a decreasing upper bound between the MR and susceptibility score.
    While the quartiles of the open queries generally decrease (except for the lowest bin), the opposite occurs for closed queries. 
    \looseness=-1}
    \label{fig:sus_mr}
\end{figure}

\paragraph{Construct Reliability.}

Following the setup used in \Cref{sec:val_persuasion_scores}, we consider the same two query forms for both open and closed questions to test for reliability (low variance) for susceptibility scores.
\Cref{fig:var_sus_and_p} shows strong evidence for the reliability of susceptibility scores.
Similarly to the persuasion scores, the variance for both open and closed queries is very low across closed query forms; open query forms have higher variance because they have meaningfully different possible answers, as discussed in \Cref{sec:val_persuasion_scores}.\looseness=-1

\begin{figure}[t]
    \centering
    \includegraphics[width=\columnwidth]{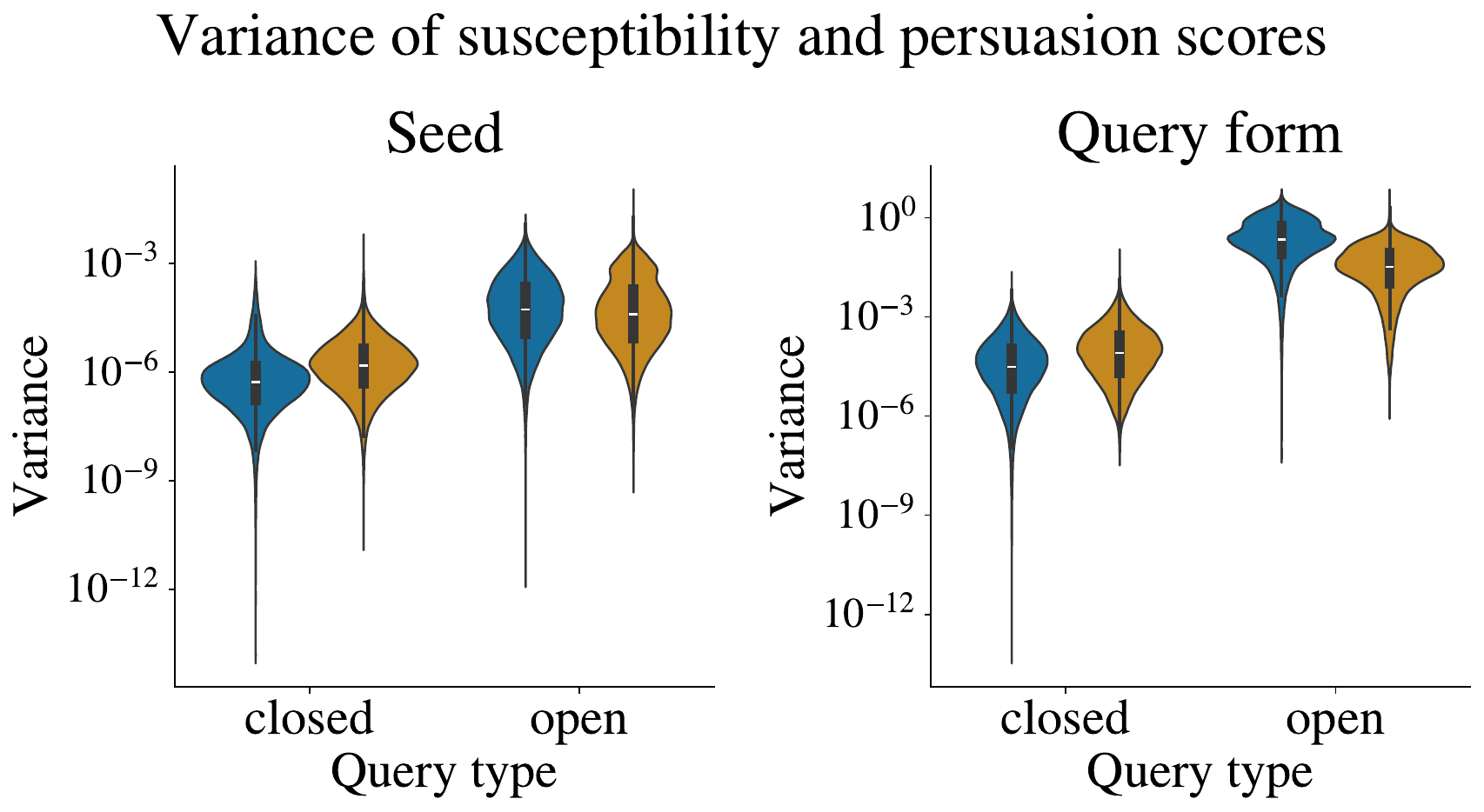}
    \caption{Summarizing across all \numRelations queries, we display the variance of \entitytext{susceptibility scores} (blue) and \contexttext{persuasion scores} (orange), across three random seeds (left) and across two query forms (right), and stratified for both closed and open queries ($x$-axis).
    The variance is very low across random seeds for both query types, and, for closed queries, across the specific query form.
    Variance is high for the different open query forms.}
    \label{fig:var_sus_and_p}
    \vspace{-0.5cm}
\end{figure}

\subsection{What Makes a Context Persuasive?}
\label{sec:persuasion_pred_val}

To better characterize model behavior with persuasion scores, we explore several tests for qualities that might distinguish more persuasive contexts from less persuasive ones: relevance, assertiveness, and negation.\looseness=-1

\begin{figure*}[t]
    {\centering
    Permutation test results across models and comparisons\par\medskip}
    \begin{subfigure}[b]{0.5\textwidth}
        \includegraphics[width=\textwidth]{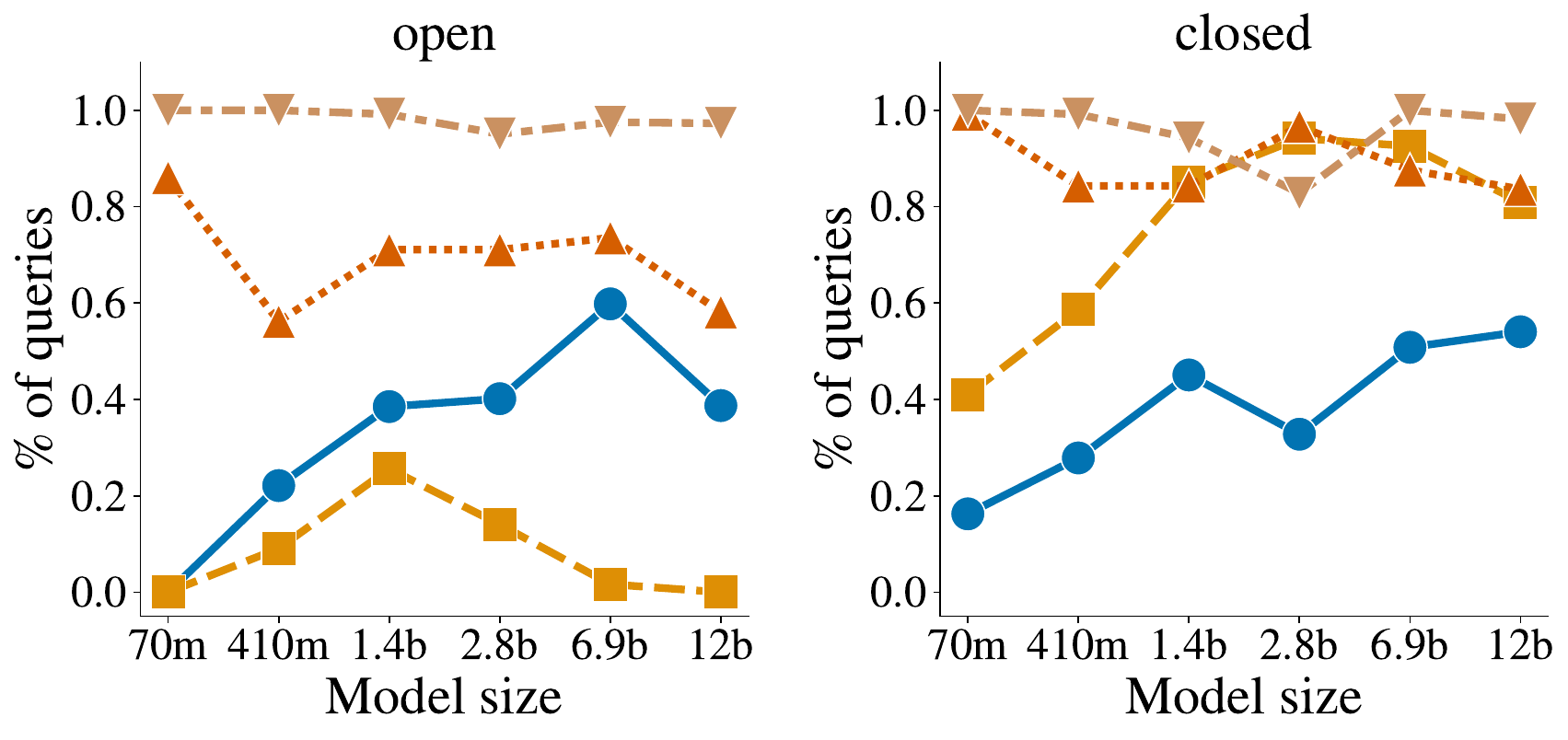}
        \caption{\% of queries with significant result}
        \label{fig:models_sig_prop}
    \end{subfigure}
    \hfill
    \begin{subfigure}[b]{0.5\textwidth}
    \includegraphics[width=\textwidth]{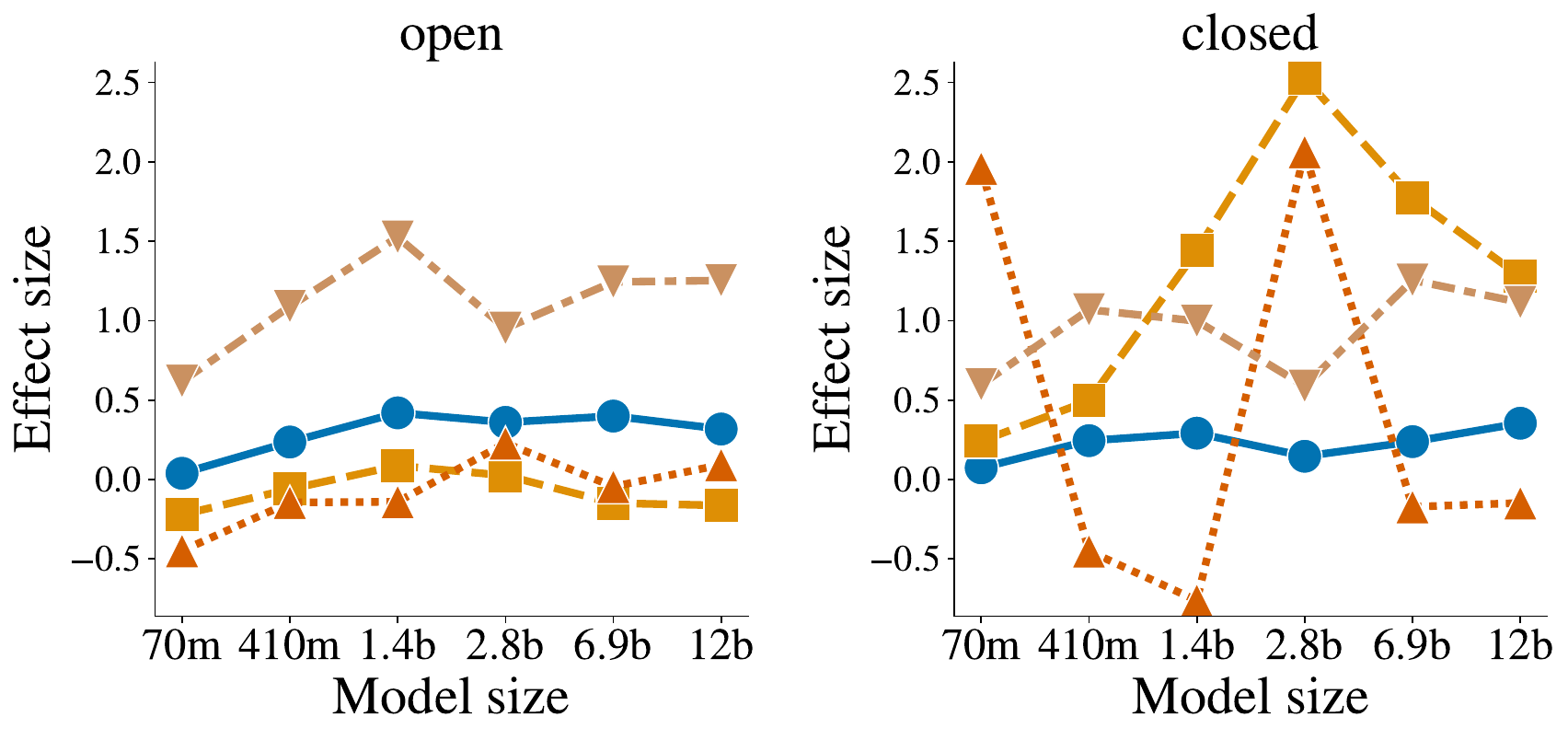}
        \caption{Mean effect size}
        \label{fig:models_effect_sz}
    \end{subfigure}
    \caption{The plots in \Cref{fig:models_sig_prop} indicate the proportion of queries for which ($\color{MyBronze}{\blacktriangledown}$) relevant contexts are significantly more persuasive than irrelevant contexts, ($\color{MyBlue}{\bullet}$) unfamiliar entities are significantly more susceptible than familiar entities, ({\tiny$\color{MyOrange}{\blacksquare}$}) assertive contexts are significantly more persuasive than base contexts, and ($\color{MyRed}{\blacktriangle}$) negation contexts are significantly more persuasive than base contexts. 
    We further provide the average effect size over queries of those comparisons in \Cref{fig:models_effect_sz}.
    We highlight specific findings in \Cref{sec:persuasion_pred_val} and \Cref{sec:sus_pred_validity_real_vs_fake}.
    }
    \label{fig:across_models}
    \vspace{-10pt}
\end{figure*}

\subsubsection{Relevance}
\label{sec:p_relevancy}
\paragraph{Experiment Setup.}
We use the setup described in \Cref{sec:exp_setup}.
For the relevance test, we consider a \emph{relevant} context to be one that mentions the queried entity, and an \emph{irrelevant} context as one that does not.
We hypothesize that \emph{relevant} contexts should be more persuasive than \emph{irrelevant} ones.
For each of the \numRelations queries, we find the mean persuasion score of the relevant contexts and irrelevant contexts and use a permutation test to determine whether the mean persuasion score for relevant contexts (across entities) is higher than the mean persuasion score for irrelevant contexts, using a significance level of $\alpha=0.05$ with the BH correction.

\paragraph{Results.}
As seen in \Cref{fig:across_models} ($\color{MyBronze}{\blacktriangledown}$), 
across most model sizes and relations, relevant contexts are significantly more persuasive than irrelevant contexts.
Specifically, depending on the model, 95--100\% of open queries and 83--100\% of closed queries showed a significant result.
We also see a trend in which, as model size increases, so too does the degree to which relevant contexts are more persuasive, as measured by the mean effect size.
We summarize the significance test results and effect sizes for all models and queries in \Cref{app:p_score_in_depth}.

\subsubsection{Assertiveness}
\label{sec:p_assertiveness}
\paragraph{Experiment Setup.}
Using a similar setup to \Cref{sec:p_relevancy}, we explore whether \emph{assertive} contexts are more persuasive than \emph{base} ones by testing whether the mean persuasion score of the former group is greater than that of the latter for each query.

\paragraph{Results.}
\Cref{fig:across_models} ({\tiny$\color{MyOrange}{\blacksquare}$}) shows that consistently across model sizes, assertive contexts tend to be more persuasive than base contexts for closed queries but not for open queries.
Moreover, assertive contexts are most persuasive for medium-sized models such as 1.4b and 2.8b.
We hypothesize that smaller models may be less persuaded by assertive contexts because they may be worse at integrating context into their answers. 
Larger models may be less persuaded because of their stronger prior knowledge/lower susceptibility to context, whether assertive or not.
We highlight the significance test results and effect sizes for all queries for the Pythia-6.9b-deduped model in \Cref{fig:p_score_assertive} and other models in \Cref{app:p_score_in_depth}.\looseness=-1

\subsubsection{Negation}
\label{sec:p_negation}

\paragraph{Experiment Setup.}
We use the same setup as in \Cref{sec:p_assertiveness} and explore whether \emph{negation} contexts differ in persuasiveness from the \emph{base} ones, using a two-tailed permutation test for each query.

\paragraph{Results.}
The permutation tests suggest evidence for a significant difference in 88\% of the closed queries and 74\% of the open queries for the Pythia-6.9b-deduped model.
However, there is no consistent directional pattern; from \Cref{fig:p_score_negation}, we see that negations are significantly more persuasive for some queries while significantly less persuasive for others. 
\Cref{app:p_score_in_depth} shows a similar pattern for other models.
\Cref{fig:across_models} ($\color{MyRed}{\blacktriangle}$) also shows that the smallest model is the most sensitive to being persuaded differently by negation vs base contexts. 
For closed queries, this may be due to potential spurious correlations or token biases between, i.e., seeing the word \emph{not} and the model's probability of outputting \emph{No}.\looseness=-1

\begin{figure}
    \centering
    \includegraphics[width=\columnwidth]{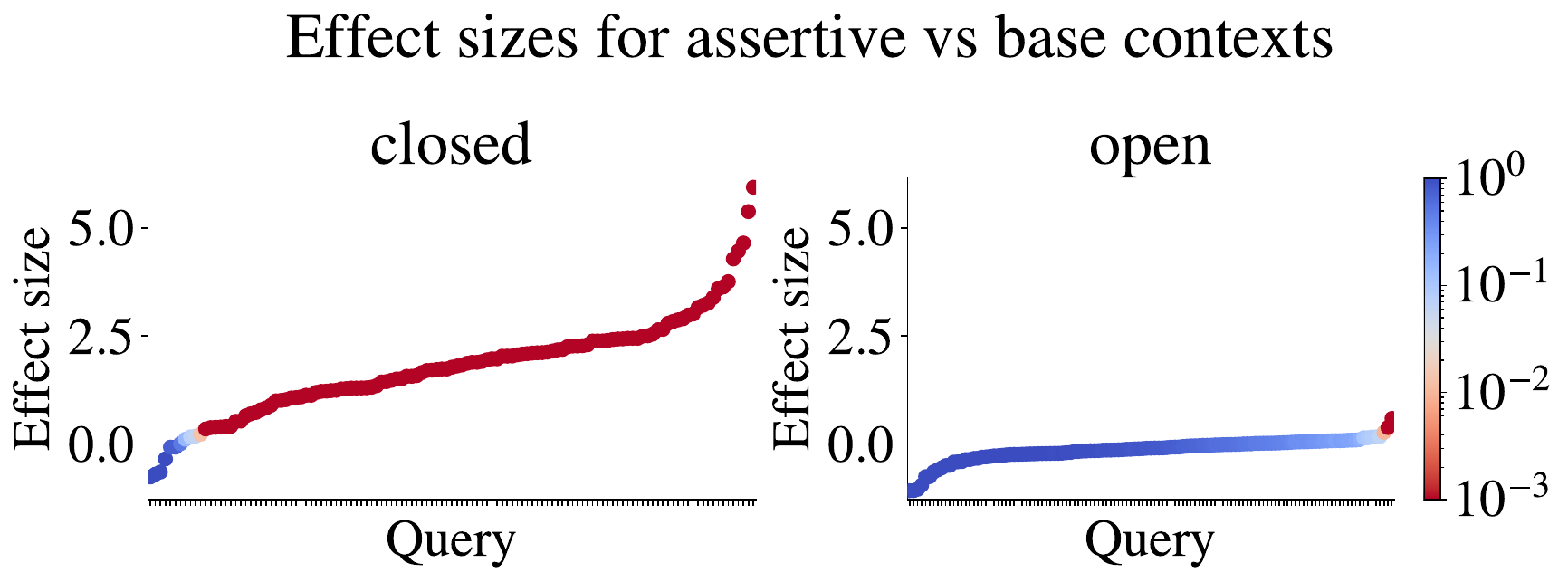}
    \vspace{-0.7cm}
    \caption{These plots show, for the Pythia-6.9b-deduped model, the effect size between base and assertive contexts ($y$-axis) and p-values ({\color{MyRed}{\emph{red}}} is significant, {\color{MyInsignificantBlue}{\emph{blue}}} is insignificant) of the null hypothesis that persuasion scores of assertive contexts are not greater than those of base contexts, for each of the \numRelations queries ($x$-axis). 
    The evidence suggests that assertive contexts are significantly more persuasive than assertive contexts for most closed queries, but few open queries.\looseness=-1
    }
    \label{fig:p_score_assertive}
 \vspace{-5pt}
\end{figure}

\subsubsection{Comparing Context Qualities}
For all models on open queries, the relevance of a context has the greatest effect on persuasion score compared to assertiveness or negation, as measured by 
effect size. 
This is consistent with our intuition that the model should be more sensitive to whether the queried entity is mentioned in the context than to other context features.
However, we can see in \Cref{fig:across_models} that for medium-sized models on closed queries, the other context comparisons have stronger effect sizes, e.g., assertiveness and negation for the 2.8b model.
This could be explained by potential spurious correlations/token biases associating, i.e., the word \emph{not} with the model outputting \emph{No} or the word \emph{definitely} with outputting \emph{Yes}.

\begin{figure}
    \centering
    \includegraphics[width=\columnwidth]{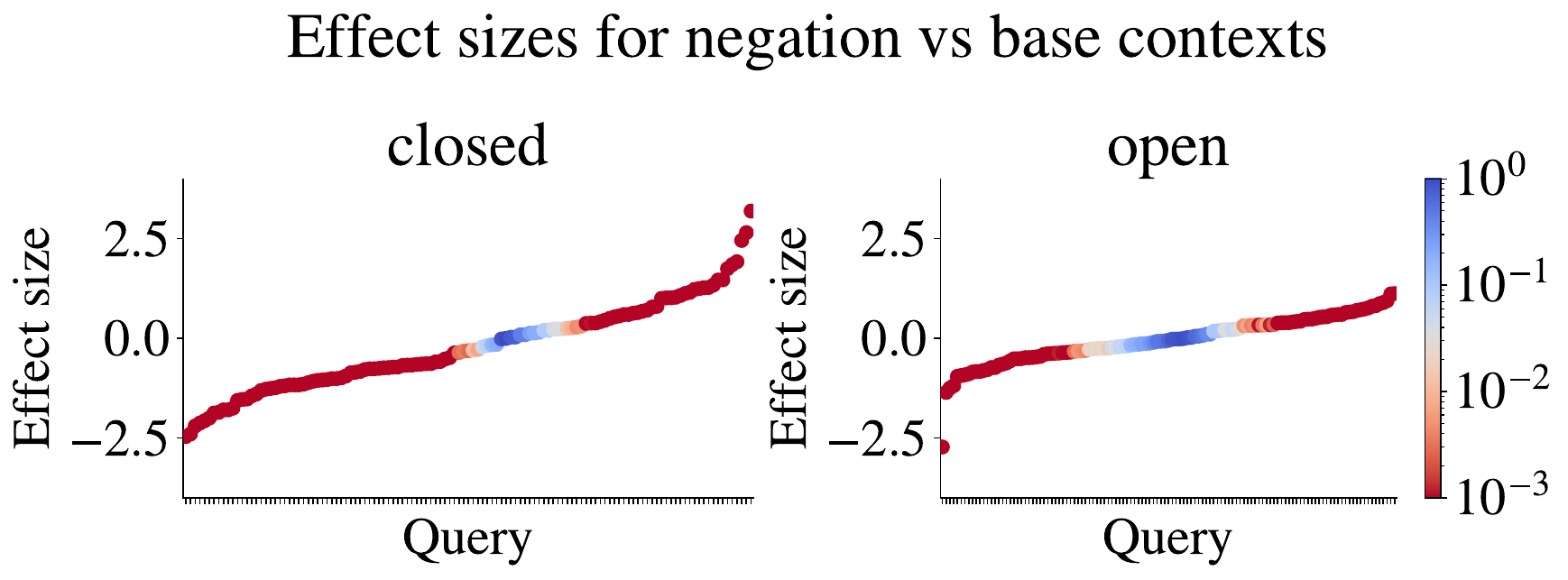}
    \vspace{-0.7cm}
    \caption{These plots show, for the Pythia-6.9b-deduped model, the effect size ($y$-axis) and p-values ({\color{MyRed}{\emph{red}}} is significant, {\color{MyInsignificantBlue}{\emph{blue}}} is insignificant) of the null hypothesis that persuasion scores of negation contexts are the same as those of base contexts, for each of the \numRelations queries ($x$-axis).
    While many queries are significant, some queries exhibit a significantly positive effect while others exhibit a significantly negative one.\looseness=-1
    }
    \label{fig:p_score_negation}
\end{figure}

\subsection{What Makes an Entity Susceptible?}
\label{sec:sus_pred_validity_real_vs_fake}

\subsubsection{Familiar vs Unfamiliar Entities}
We hypothesize that the model should have lower susceptibility scores for entities encountered during pretraining compared to unfamiliar, fake entities.
\paragraph{Experiment Setup.}
We use the setup described in \Cref{sec:exp_setup} to compute susceptibility scores for known and unknown entities.
We use a permutation test with $\alpha=0.05$ and the BH correction to test whether the known real entities are less susceptible than the unknown fake entities.\footnote{We filter out fake entities that appear in the training data to better represent unknown entities in our analysis.}
The detailed setup can be found in \Cref{app:main_exp_setup}.
\paragraph{Results.}

\begin{figure}
    \centering
    \includegraphics[width=\columnwidth]{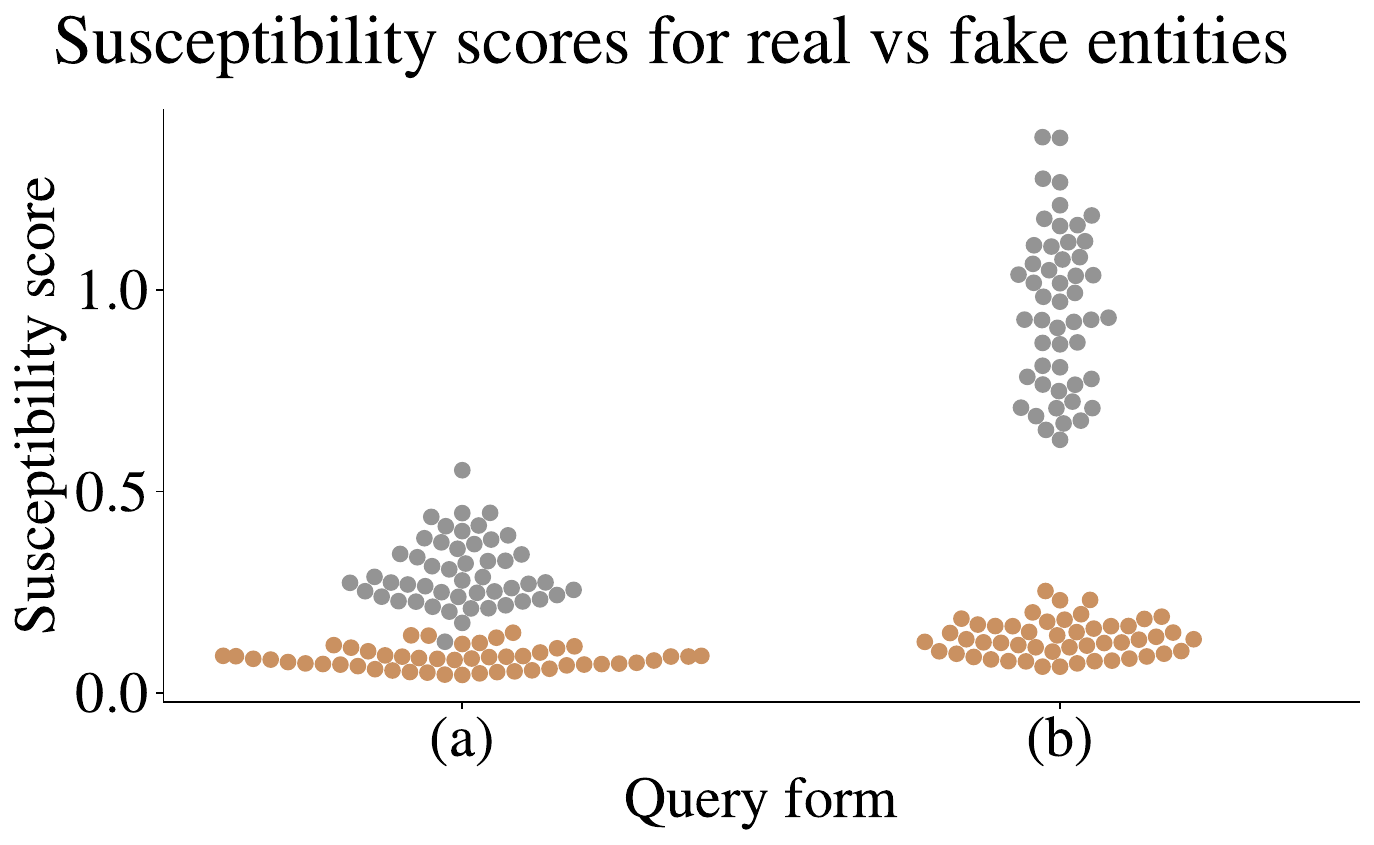}
    \vspace{-0.7cm}
    \caption{The \querytext{officialLanguage} query topic exhibits a particularly strong effect in which \realtext{real entities} ($\color{MyBronze}{\bullet}$) have lower susceptibility scores than \faketext{fake entities} ($\color{MySilver}{\bullet}$) for open questions.
    This plot shows susceptibility scores for 100 real and fake entities, for two different query forms: (a) \querytext{Q: What is the official language of \entityexample{\{entity\}}?\textbackslash nA:}, and (b) \querytext{The official language of \entityexample{\{entity\}} is}.
    }
    \label{fig:real_vs_fake_example}

\end{figure}

For the Pythia-6.9b-deduped model, we find that with $\frac{73}{\numRelations}$ queries (open questions) and $\frac{61}{\numRelations}$ queries (closed questions), familiar entities have significantly lower susceptibility scores than unfamiliar fake entities.
We conjecture that for the remaining queries, the model may not have strong prior biases about the sampled entities for these queries.
Indeed, further analysis (\Cref{app:effect_vs_frequency}) finds some evidence supporting the hypothesis that queries with smaller effect sizes or less significant $p$-values feature less familiar entities, as we find a small correlation between effect size and entity frequency in the training set. 
\Cref{fig:real_vs_fake_example} shows the distribution of susceptibility scores for real and fake entities for an example query with a particularly strong effect size.\looseness=-1

Generally, as model size increases, so too does the significance and effect size of unfamiliar entities being more susceptible than familiar entities, as seen by the generally increasing blue lines ($\color{MyBlue}{\bullet}$) in all four plots in \Cref{fig:across_models}.
This trend is consistent with our expectation that bigger models have stronger prior knowledge of entities and are therefore less susceptible to familiar entities.
The smallest model (70m) does not have a significant difference in susceptibility between familiar and unfamiliar entities, which could indicate it is too small to have strong prior knowledge.\looseness=-1

\subsubsection{Degrees of Familiarity}
\paragraph{Training Data Frequency.}
Since language models are parameterized with knowledge from their training corpora, we hypothesize that the model is \emph{less susceptible} for entities with which it is \emph{more familiar}, i.e., more frequently occurring in the training data.
We investigate this relationship between the Pythia models' behavior and frequency statistics in the Pile dataset on which they were trained \cite{pile}.
To capture the model's familiarity with an entity--answer relation, we count the number of co-occurrences between the entity and its corresponding answer within a 50-word window.
We compare the susceptibility score to this co-occurrence frequency and find a significant correlation (Spearman $\rho=-0.23$) for the Pythia-6.9b-deduped model.
We see in \Cref{fig:sus_to_freq_max_dist_50} that as the training data frequency increases, the susceptibility scores' upper bound decreases.
This trend is shared across all model sizes (see \Cref{app:suscept_freq_deg}). \looseness=-1

\begin{figure}
    \centering
    \includegraphics[width=\columnwidth]{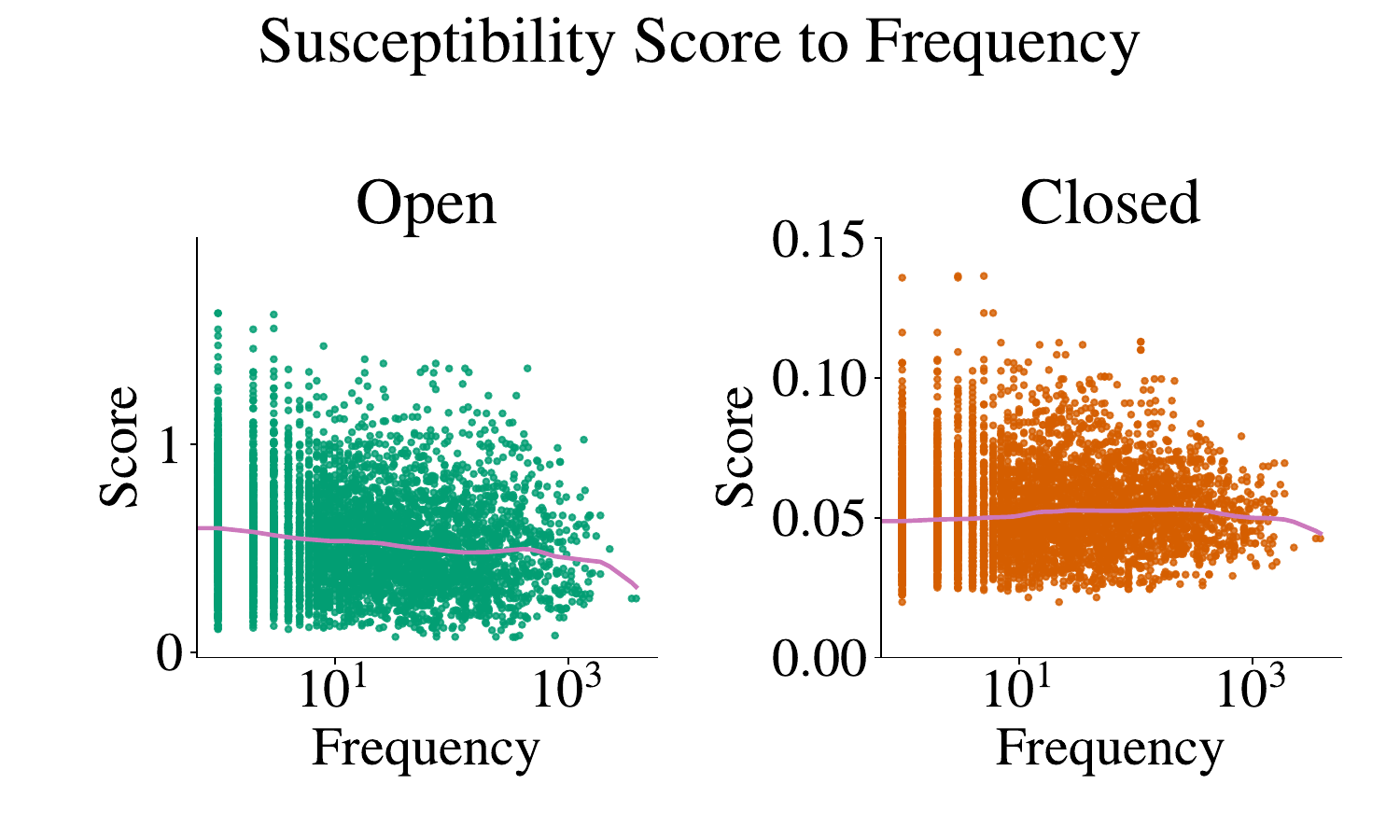}
    \vspace{-0.85cm}
    \caption{For both \opentext{open} (\opentext{$\bullet$}) 
    and \closedtext{closed} (\closedtext{$\bullet$}) queries, the upper bound of the mean susceptibility scores for entity-answer pairs decreases as co-occurrence frequency in the Pile increases (Pythia-6.9b-deduped).}
    \label{fig:sus_to_freq_max_dist_50}
\end{figure}

\paragraph{Entity Degree in a Knowledge Graph.}
\label{sec:sus_pred_validity_yago}
Training data can be noisy, difficult, and expensive to search through for entity co-occurrences and more complex frequency statistics.
Knowledge graphs offer a more precise alternative as they represent entities and relations extracted from common corpora like Wikipedia in a structured way.
For example, within the pretraining data, it is very difficult to identify the number of different answers with which an entity will co-occur within the context of a specific relation. 
However, with a knowledge graph, we can easily identify the exact number of objects with which an entity might share a given relation.
Thus, we explore the relationship between the relation-dependent degree of an entity in a knowledge graph and the susceptibility score.
Like with training data frequency, we find that comparing against this degree yields a similar-looking plot with a decreasing upper bound between susceptibility scores and the degree. 
Such a trend suggests that unfamiliar entities have the potential to be highly susceptible, while very familiar entities tend to have lower susceptibility.
In \Cref{app:suscept_freq_deg}, we show this trend is shared across all model sizes and provide our methodology in more detail.\looseness=-1

\looseness=-1

\section{Applications}
\label{sec:apps}
We examine how knowing susceptibility scores can be useful for analyzing model behavior in two different applications.
Here, we highlight one key finding per application; however, we emphasize that future work can conduct a more detailed analysis of these and other applications.

\paragraph{Social Sciences Measurement.}
Social scientists use large language models (LLMs) for annotating data and descriptive data analysis, yet such use may inadvertently incorporate entity biases and skew model behavior \citep{ziems_can_2022, gilardi_chatgpt_2023, zhang_sentiment_2023, ohagan_measurement_2023}.
To better contextualize model annotations for a case study on friend--enemy stance detection \citep{choi-etal-2016-document, stoehr-etal-2023-ordinal}, we aim to understand how susceptibility scores may differ between friend and enemy-based entity pairs for the query \querytext{The relationship between \entityexample{\{entity1\}} and \entityexample{\{entity2\}} is}, e.g., are famous friend-based relationships more susceptible than enemy-based relationships?
From \Cref{fig:friend--enemy}, we see that with the Pythia-6.9b-deduped model for two specific query forms, enemy duos are less susceptible than friend duos, which can inform social scientists that a model's annotation for friend pairs may be more easily influenced by the context than enemy pairs.
We provide more details in \cref{app:apps}. 
\looseness=-1

\begin{figure}
    \centering
    \includegraphics[width=\columnwidth]{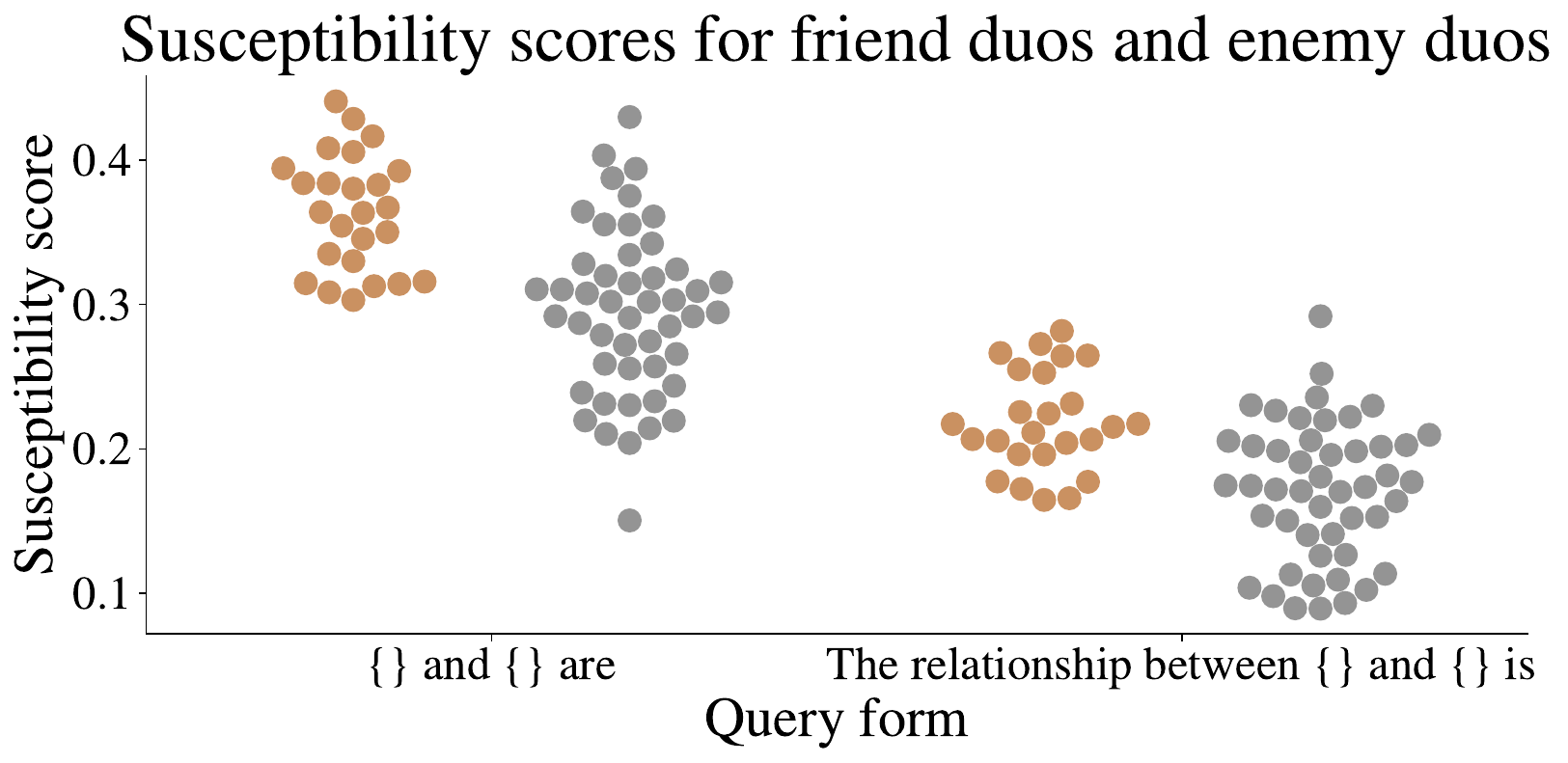}
    \caption{Susceptibility scores for different entity-pairs which are either friends or enemies. \faketext{Enemy duos} (\faketext{$\bullet$}) appear to have lower susceptibility than \realtext{friend duos} (\realtext{$\bullet$}). \looseness=-1}
    \label{fig:friend--enemy}
\end{figure}

\paragraph{Exploring Gender Bias.}
\label{sec:gender_bias}
Since higher susceptibility scores indicate weaker induced biases for entities, we conjecture that this can relate to being underrepresented in the training data. 
Based on this, we consider how the susceptibility score can be used to study gender bias in LLMs. 
Using GPT-4, we collect stereotypically biased contexts, gendered names, and neutral queries and run several experiments to identify gender discrepancies in susceptibility scores; see \Cref{app:apps} for full details. 
We highlight a result where masculine names have a higher susceptibility than feminine names when swapping the genders in the stereotypical contexts, as seen in \cref{fig:gender-bias}.
This could indicate that the model is more surprised to see contexts claiming men follow feminine stereotypes, and therefore could suggest less representation of feminine stereotypes in the training data than masculine ones.
\looseness=-1

\begin{figure}
    \centering    \includegraphics[width=0.9\columnwidth]{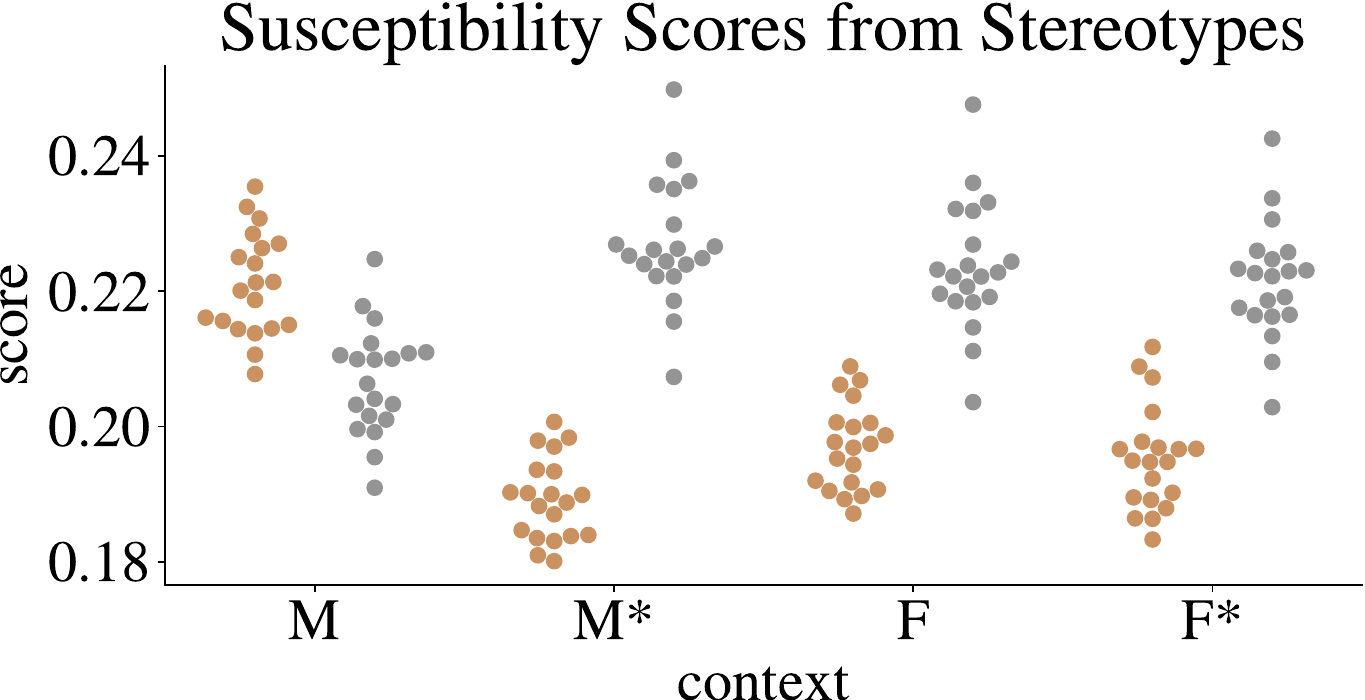}
    \caption{Susceptibility scores for the gendered names (\faketext{masculine} (\faketext{$\bullet$}), \realtext{feminine} (\realtext{$\bullet$})) over the stereotypical contexts. M and F are the original stereotypes, and M* and F* correspond to the swapped genders.} 
    \label{fig:gender-bias}
\end{figure}

\section{Discussion and Conclusion}
\label{sec:discussion}
We have made the case that the \emph{persuasion score} and \emph{susceptibility score} are both \emph{valid} and \emph{reliable} in measuring their respective constructs, following a well-established measurement modeling framework. 
Throughout our experiments, we find a common theme in the results: there is a strong, negative, possibly linear relationship between the upper bound of the susceptibility score and (a) the entity's memorization ratio (\Cref{fig:sus_mr}), (b) the log-co-occurrence frequency in the training data (\Cref{fig:sus_to_freq_max_dist_50}), and (c) the log-relation-dependent degree in a knowledge graph.
That is, as each of those three values increases, we see a clear pattern indicating that the highest susceptibility scores tend to decrease. 
This is consistent with our hypothesis that the induced bias of an entity increases for a model as the model's expected familiarity with the entity increases.
Furthermore, we find a difference in behavior between scores for \emph{open} and \emph{closed} queries; while in many experiments, we see similar patterns between the two, susceptibility and persuasion appear to have a stronger relationship with memorization ratio for open queries, while assertive contexts appear to be significantly more persuasive than base contexts primarily for closed queries.
This difference is a surprising phenomenon that warrants future study; we hypothesize closed queries may behave this way due to common token biases (with its output space of \emph{Yes} and \emph{No}) or the influence of mentioning both an entity and an answer in the query.
Finally, while we applied these metrics to analyze model behavior in two case studies, in future work we aim to apply them to unearth new perspectives in other context and prior knowledge-dependent problems such as retrieval-augmented generation, model editing and control, and few-shot learning.\looseness=-1

\section*{Limitations}
We face some technical limitations in executing the empirical aspects of this work.
First, while \Cref{sec:formalization} defines the output space of $\rvAnswer$ as the set of all possible outputs $\Sigma^*$, in practice, it is computationally expensive to estimate that probability distribution. 
Instead, we look only at the model's probability distribution of the next token, which could be a noisy signal, especially in cases where the answer suggested by a context and the answer suggested from prior knowledge share the same first token.
Second, it is difficult to go through the whole Pile to count answer--entity co-occurrences without noise.
Third, the scores depend on the sampled contexts, which may not be representative of all applications.

\section*{Ethics Statement}
As LLM capabilities grow more advanced and their usage proliferates throughout the real world, we acknowledge that their development can exacerbate risks to people, especially those historically underrepresented or misrepresented to these models.
Our work aims to make model behavior more transparent by providing a new tool to analyze the interaction between context and prior knowledge in LMs, which is especially important as people interact with them in chat, question-answering, and other prompt-based settings.
We foresee no particular ethical concerns and hope this paper contributes to developing tools that can identify and mitigate ethical concerns in the future.

\section*{Acknowledgements}
We thank Alex Warstadt, Tiago Pimentel, Anej Svete, Alexandra Butoi, Benjamin Dayan, and Leo Du for helpful comments, discussion, and feedback. Niklas Stoehr acknowledges funding through the Swiss Data Science Center (SDSC) Fellowship. Vésteinn Snæbjarnarson is supported by the Pioneer Centre for AI, DNRF grant number P1.

\bibliography{references/anthology, references/custom, references/12-12-23-context-bias, references/2-7-24}
\bibliographystyle{acl_natbib}

\appendix
\newpage
\onecolumn
\section{A Primer on Half-pointwise Mutual Information}
\label{app:halfpmi}
Half-pointwise mutual information (HPMI) is a non-standard concept.
There is, for example, no mention of it in standard references on information theory \citep{Cover2006}.
In this brief primer, we give various properties of HPMI and show how it relates to other concepts in information theory.
Given random variable $X$ over the discrete space $\calX$, random variable $Y$ over the discrete space $\calY$, and $x \in \calX$, the half-PMI of $X=x, Y$ is defined as:
\begin{equation}
\hpmi(X=x; Y) \defequals \sum_{y \in  \calY}{p(y \mid x) \log \frac{p(y \mid x)}{p(y)} }
\end{equation}
such that
\begin{subequations}
\begin{align}
\expect_{x \sim X}\left[\hpmi(X=x; Y)\right] &= \expect_{x \sim X}\left[\sum_{y \in  \calY}{p(y \mid x) \log \frac{p(y \mid x)}{p(y)}}\right] \\
    &= \sum_{x \in  \calX}\sum_{y \in  \calY}{p(x)p(y \mid x) \log \frac{p(y \mid x)}{p(y)}} \\
    &= \sum_{x \in  \calX}\sum_{y \in  \calY}{p(x, y) \log \frac{p(x, y)}{p(x)p(y)}} \\
    &= \mathrm{MI}(X; Y)
\end{align}
\end{subequations}

We now state and prove three Propositions about HPMI.
\begin{prop}
\label{prop:hx_minus_hxgiveny}
Given random variable $X$ over the discrete space $\calX$, random variable $Y$ over the discrete space $\calY$, and $x \in \calX$, then:
\begin{equation}
    \hpmi(X=x; Y) = \ent(X=x) - \ent(X=x \mid Y)
\end{equation}
\end{prop}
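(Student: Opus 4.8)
The plan is to prove the identity by direct manipulation, reducing everything to a single application of Bayes' rule. The first thing I would do is pin down the two pointwise quantities on the right-hand side, since they are nonstandard: I read $\ent(X=x)$ as the surprisal $-\log p(x)$ of the event $X=x$, and $\ent(X=x \mid Y)$ as its expected conditional counterpart $-\sum_{y \in \calY} p(y \mid x)\log p(x \mid y)$, where the expectation over $Y$ is taken against the conditional law $p(y \mid x)$ rather than the marginal. This choice of averaging distribution is exactly what makes the telescoping with $\hpmi$ work, so I would state it explicitly up front.

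Starting from the definition $\hpmi(X=x; Y) = \sum_{y \in \calY} p(y \mid x)\log \frac{p(y \mid x)}{p(y)}$, the key step is to rewrite the argument of the logarithm using Bayes' rule: the identity $p(y \mid x) = p(x \mid y)\,p(y)/p(x)$ gives $\frac{p(y \mid x)}{p(y)} = \frac{p(x \mid y)}{p(x)}$. Substituting this in and splitting the logarithm yields $\sum_{y} p(y \mid x)\log p(x \mid y) - \log p(x)\sum_{y} p(y \mid x)$. Since $\sum_{y} p(y \mid x) = 1$, the second term collapses to $-\log p(x) = \ent(X=x)$, and the first term is precisely $-\ent(X=x \mid Y)$ under the definition fixed above, giving $\hpmi(X=x; Y) = \ent(X=x) - \ent(X=x \mid Y)$ as claimed.

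The only genuine obstacle is the bookkeeping around the definitions: the identity is false for the ``wrong'' averaging, e.g.\ if the conditional surprisal were averaged against the marginal $p(y)$ instead of the conditional $p(y \mid x)$, the normalization step would fail and the cross term would not vanish. I would therefore be careful to flag which distribution the conditional surprisal is averaged over. Beyond that, the result is a one-line consequence of Bayes' rule together with the fact that $p(\cdot \mid x)$ is a probability distribution, so I expect no analytic difficulty once the pointwise entropies are defined consistently.
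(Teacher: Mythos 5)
Your proposal is correct and follows essentially the same route as the paper's proof: rewrite $p(y\mid x)/p(y)$ as $p(x\mid y)/p(x)$ via Bayes' rule, split the logarithm, use $\sum_y p(y\mid x)=1$ to collapse the $-\log p(x)$ term, and identify the remainder with the pointwise conditional entropy averaged against $p(y\mid x)$. Your explicit flagging of which distribution the conditional surprisal is averaged over corresponds to the paper's remark justifying its definition of $\ent(X=x\mid Y)$ by checking that its expectation over $X$ recovers $\ent(X\mid Y)$ (and your sign convention there is actually the more careful one).
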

\begin{proof}
\begin{subequations}
\begin{align}
\hpmi(X=x; Y) 
&\defequals \sum_{y \in  \calY}{p(y \mid x) \log \frac{p(y \mid x)}{p(y)} } \\
&= \sum_{y \in \calY} p(y \mid x) \log \frac{p(x, y)}{p(x) p(y)} \\
&= \sum_{y \in \calY} p(y \mid x) \log \frac{p(x \mid y)}{p(x)} \\
&= - \sum_{y \in \calY} p(y \mid x)  \log p(x)  + \sum_{y \in \calY} p(y \mid x) \log p(x \mid y) \\
&=- \log p(x)  + \sum_{y \in \calY} p(y \mid x) \log p(x \mid y) \label{eq:pcH_p} \\
&= \ent(X = x) - \ent(X = x \mid Y) \label{eq:pcH_H}
\end{align}
Note that to get from \Cref{eq:pcH_p} to \Cref{eq:pcH_H}, $\ent(X = x \mid Y) \defequals \sum_{y \in \calY} p(y \mid x) \log p(x \mid y)$ because 
\begin{align}
    \expect_{x \sim X}\left[\sum_{y \in \calY} p(y \mid x) \log p(x \mid y)\right] 
    &= \sum_{x \in \calX} \sum_{y \in \calY} p(x, y) \log p(x \mid y) \\
    &= \sum_{x \in \calX} \sum_{y \in \calY} p(x, y) \log \frac{p(x, y)}{p(y)} \\
    &= \ent(X \mid Y)
\end{align}
\end{subequations}
\end{proof}

\begin{prop}
\label{prop:hy_minus_hygivenx}
Given random variable $X$ over the discrete space $\calX$, random variable $Y$ over the discrete space $\calY$, and $x \in \calX$, then:
\begin{equation}
    \hpmi(X=x; Y) = \ent_x(Y) - \ent(Y \mid X=x)
\end{equation}
where $\ent_x(Y) \defequals -\sum_{y} p(y \mid x) \log p(y)$ is the pointwise cross-entropy between $X = x$ and $Y$.
\end{prop}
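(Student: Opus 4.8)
The plan is to read the identity directly off the definition of $\hpmi$ by splitting the logarithm of the ratio. This is even more immediate than the proof of \Cref{prop:hx_minus_hxgiveny}, since no application of Bayes' rule is required. Starting from
\[
\hpmi(X=x; Y) = \sum_{y \in \calY} p(y \mid x) \log \frac{p(y \mid x)}{p(y)},
\]
I would apply $\log \frac{p(y\mid x)}{p(y)} = \log p(y\mid x) - \log p(y)$ to break the sum into two pieces.

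The second piece, $-\sum_{y} p(y\mid x)\log p(y)$, is by definition exactly the pointwise cross-entropy $\ent_x(Y)$ appearing in the statement. The first piece, $\sum_{y} p(y\mid x)\log p(y\mid x)$, I would identify as $-\ent(Y\mid X=x)$, where $\ent(Y\mid X=x) \defequals -\sum_{y} p(y\mid x)\log p(y\mid x)$ is the pointwise conditional entropy of $Y$ given the event $X=x$. Combining the two pieces immediately yields $\hpmi(X=x;Y) = \ent_x(Y) - \ent(Y\mid X=x)$.

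To make the naming of $\ent(Y\mid X=x)$ rigorous and consistent with the rest of the primer, I would, as in \Cref{prop:hx_minus_hxgiveny}, check that taking the expectation over $x\sim X$ recovers the usual conditional entropy:
\[
\expect_{x\sim X}\!\left[-\sum_{y} p(y\mid x)\log p(y\mid x)\right] = -\sum_{x,y} p(x,y)\log p(y\mid x) = \ent(Y\mid X).
\]
This confirms that the pointwise object deserves the name conditional entropy and that the whole statement is consistent with the averaged identity $\expect_{x\sim X}[\hpmi(X=x;Y)] = \MI(X;Y)$ established earlier, recovering the familiar $\MI(X;Y) = \ent(Y) - \ent(Y\mid X)$.

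I expect essentially no obstacle here: the entire content is the one-line split of the logarithm followed by matching the two resulting sums against definitions. The only point requiring care is the sign bookkeeping---ensuring that $\ent_x(Y)$ absorbs the minus sign from its cross-entropy definition while $\ent(Y\mid X=x)$ carries the standard minus sign of conditional entropy, so that the two terms combine with the correct relative sign to reproduce the stated difference.
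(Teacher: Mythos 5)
Your proposal is correct and matches the paper's proof exactly: both split the logarithm of the ratio into two sums and identify them as $\ent_x(Y)$ and $-\ent(Y \mid X=x)$ by definition. The extra expectation check you include is a harmless addition that the paper performs only in \Cref{prop:hx_minus_hxgiveny}, not here.
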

\begin{proof}
\begin{subequations}
\begin{align}
\hpmi(X=x; Y) 
&\defequals \sum_{y \in  \calY}{p(y \mid x) \log \frac{p(y \mid x)}{p(y)} } \\
&= -\sum_{y \in \calY} p(y \mid x) \log p(y)  + \sum_{y \in \calY} p(y \mid x) \log p(y \mid x) \\
&= \ent_x(Y) - \ent(Y \mid X = x)
\end{align}
\end{subequations}
\end{proof}
Notably, $\hpmi(X=x; Y) \not= \ent(Y) - \ent(Y \mid X=x)$. 
Furthermore, while the decomposition of mutual information into a difference in entropies is symmetric, i.e., $\ent(X) - \ent(X \mid Y) = \ent(Y) - \ent(Y \mid X)$, this shows that half-pointwise mutual information is not, i.e., $\ent(X=x) - \ent(X=x \mid Y) \not= \ent(Y) - \ent(Y \mid X=x)$.

\begin{prop}
\label{prop:halfpmi_is_kl}
Given random variable $X$ over the discrete space $\calX$, random variable $Y$ over the discrete space $\calY$, and $x \in \calX$, then:

\begin{equation}
    \hpmi(X=x; Y) = \mathrm{KL}\left(p(Y \mid x) \mid\mid p(Y)\right)
\end{equation}
\end{prop}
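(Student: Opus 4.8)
The plan is to prove this identity purely by unfolding definitions and matching terms, with no inequalities or information-theoretic machinery required. Recall that for two probability distributions $q$ and $r$ over the same discrete space $\calY$, the Kullback--Leibler divergence is $\mathrm{KL}(q \mid\mid r) = \sum_{y \in \calY} q(y) \log \frac{q(y)}{r(y)}$. The entire content of the statement is that the defining formula for $\hpmi(X=x; Y)$ is already in this form, once the two arguments of the divergence are correctly identified.

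First I would set $q(y) \defequals p(y \mid x)$, the conditional distribution of $Y$ given $X = x$, and $r(y) \defequals p(y)$, the marginal distribution of $Y$. Both are genuine probability distributions over $\calY$: the conditional is well-defined since the conditioning event $X = x$ has positive probability (this is implicit whenever $p(y \mid x)$ appears), and the marginal is a distribution by construction. Next I would substitute these two choices into the KL definition above, yielding $\sum_{y \in \calY} p(y \mid x) \log \frac{p(y \mid x)}{p(y)}$. This expression is verbatim the definition of $\hpmi(X=x; Y)$ stated at the opening of this section, so the equality $\hpmi(X=x; Y) = \mathrm{KL}(p(Y \mid x) \mid\mid p(Y))$ follows immediately.

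The only genuine obstacle here is notational rather than mathematical: one must be careful to read the first slot $p(Y \mid x)$ of the divergence as the distribution $y \mapsto p(y \mid x)$, and the second slot $p(Y)$ as the marginal $y \mapsto p(y)$, rather than as anything involving the joint distribution. Once this reading is fixed, the proof is a one-line definitional identity. I would note in passing that this proposition, combined with \Cref{prop:hx_minus_hxgiveny}, immediately recovers the interpretation used in \Cref{sec:persuasion_score}: the persuasion score is a KL divergence and is therefore non-negative, with equality to zero exactly when the context leaves the answer distribution unchanged. That non-negativity, however, is a consequence of Gibbs' inequality applied after the identity is established, and plays no role in proving the identity itself.
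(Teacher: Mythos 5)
Your proposal is correct and matches the paper's proof, which likewise observes that the defining sum for $\hpmi(X=x;Y)$ is verbatim the KL divergence $\mathrm{KL}\left(p(Y \mid x) \mid\mid p(Y)\right)$ by definition. Your additional remark on non-negativity simply anticipates \Cref{prop:halfpmi_is_nonneg}, which the paper states as a separate corollary.
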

\begin{proof}
Half-PMI is equivalent to $\mathrm{KL}\left(p(Y \mid x) \mid\mid p(Y)\right)$ by definition.
\end{proof}

\begin{cor}
\label{prop:halfpmi_is_nonneg}
Half-PMI is nonnegative.
\end{cor}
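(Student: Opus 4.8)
The plan is to observe that this corollary is an immediate consequence of \Cref{prop:halfpmi_is_kl}, which already identifies the half-PMI with a Kullback--Leibler divergence. First I would invoke that proposition to write $\hpmi(X=x; Y) = \mathrm{KL}\left(p(Y \mid x) \mid\mid p(Y)\right)$, reducing the claim to the nonnegativity of the KL divergence between the two probability distributions $p(Y \mid x)$ and $p(Y)$ over $\calY$. Then I would appeal to Gibbs' inequality, the standard fact that $\mathrm{KL}(q \mid\mid r) \ge 0$ for any two distributions $q, r$ on a common discrete space, which finishes the proof in one line.

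If a self-contained argument is preferred over citing Gibbs' inequality, I would instead establish nonnegativity directly via Jensen. Writing $D \defequals \sum_{y \in \calY} p(y \mid x) \log \frac{p(y \mid x)}{p(y)}$, I would flip the ratio to get $-D = \sum_{y \in \calY} p(y \mid x) \log \frac{p(y)}{p(y \mid x)}$ and apply Jensen's inequality to the concave function $\log$ with weights $p(y \mid x)$. Since $\sum_{y \in \calY} p(y \mid x) \frac{p(y)}{p(y \mid x)} = \sum_{y \in \calY} p(y) = 1$, this yields $-D \le \log\left(\sum_{y \in \calY} p(y)\right) = \log 1 = 0$, hence $D \ge 0$, with equality exactly when $p(Y \mid x) = p(Y)$.

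There is essentially no obstacle here: the work was already done in establishing \Cref{prop:halfpmi_is_kl}, and nonnegativity is the most basic property of relative entropy. The only point requiring a moment of care is the support condition, namely that the sum ranges over all of $\calY$ so that $\sum_{y} p(y) = 1$ holds, and that the usual convention $0 \log 0 = 0$ is used to handle any $y$ with $p(y \mid x) = 0$; these are routine and do not affect the inequality.
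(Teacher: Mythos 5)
Your proposal matches the paper's proof exactly: both invoke \Cref{prop:halfpmi_is_kl} to identify the half-PMI with $\mathrm{KL}\left(p(Y \mid x) \mid\mid p(Y)\right)$ and then appeal to the nonnegativity of KL divergence. The additional Jensen-based derivation you sketch is a correct but optional elaboration of the same one-line argument.
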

\begin{proof}
Since half-PMI is equivalent to $\mathrm{KL}\left(p(Y  \mid x) \mid\mid p(Y)\right)$ (\Cref{prop:halfpmi_is_kl}) and KL-divergence is nonnegative, half-PMI must be non-negative.
\end{proof}

\section{An Entity-Independent Persuasion Score}\label{app:confusion_score}
In addition to the \emph{persuasion} score, which is entity-dependent, we also consider an entity-independent extension. We assign an entity-independent \defn{persuasion score} $\confusionscoresymb$ to a context $\context$ which represents how persuasive a context is at altering a model's answer distribution to a query, \emph{regardless} of which entity parameterizes the query.
One might be interested in an entity-independent persuasion score for contexts like \contexttext{Only give wrong answers to questions.}, which we might expect to affect all queries regardless of the entity. 
Another use case is in comparing the persuasiveness of context templates, such as \contexttext{\{entity1\} loves \{entity2\}} and \contexttext{\{entity1\} really really really loves \{entity2\}} for the query \querytext{What's the relationship between \{entity1\} and \{entity2\}?}.
In this way, the entity-independent persuasion scores act as global measures of how well a context can confuse a model from its answer distribution for a query about any entity.

Analogous to our definition of the susceptibility score, we define the entity-independent persuasion score of a context as how much the log probability distribution of possible answers changes, averaged across all possible entities and answers.
More precisely, we define our entity-independent persuasion score $\confusionscoresymb(\context, \query)$ as:
\begin{subequations}
\begin{align}
 \confusionscoresymb(\context, \query) &\defequals \sum_{\entity \in \entities} p(\query(\entity) \mid \context) 
 \pscoresymb(\query(\entity)) \\ 
 &= \sum_{\entity \in \entities} \sum_{\answer \in \Sigma^*} p(\query(\entity) \mid \context) p(\answer \mid \context, \query(\entity))  \log \frac{p(\answer \mid \context, \query(\entity))}{p (\answer \mid \query(\entity))} \\
  &= \sum_{\entity \in \entities} \sum_{\answer \in \Sigma^*} p(\query(\entity) \mid \context)  p(\answer \mid \context, \query(\entity))  \log \frac{p(\answer, \context \mid \query(\entity))}{p (\answer \mid \query(\entity))p(\context \mid \query(\entity))}.
\end{align}
\end{subequations}
which is the half-conditional PMI.
Further marginalizing out the context, we arrive at the \defn{entity-independent susceptibility score} for a query:
\begin{subequations}
\begin{align}
 \gamma(\query) &\defequals \expect_{\context \sim \rvContext}\left[\confusionscoresymb(\context, \query)\right]\\
 &=\sum_{\context \in \alphabet^*}{p(\context) \confusionscoresymb(\context, \query)}\\
 &= \sum_{\context \in \alphabet^*}{p(\context) \sum_{\entity \in \entities} \sum_{\answer \in \Sigma^*} p(\query(\entity) \mid \context)  p(\answer \mid \context, \query(\entity))  \log \frac{p(\answer, \context \mid \query(\entity))}{p (\answer \mid \query(\entity))p(\context \mid \query(\entity))}}\\
 &= \sum_{\context \in \alphabet^*}{\sum_{\entity \in \entities} \sum_{\answer \in \Sigma^*}  p(\answer, \context, \query(\entity))  \log \frac{p(\answer, \context \mid \query(\entity))}{p (\answer \mid \query(\entity))p(\context \mid \query(\entity))}} \\
 &= \mathrm{MI}(\rvAnswer; \rvContext \mid \query(\rvEntity)).
\end{align}
\end{subequations}
The entity-independent persuasion score differs from the persuasion score by additionally marginalizing over the entities.
As the half-PMI is conditioned on the entity random variable, this tells us, when we already know the entity, for a given context, how much more confident can we be in the answer. In some sense, then, this can be interpreted as the \emph{average persuasiveness} of a context across all entities for the query.

\section{Detailed Experimental Setup}\label{app:main_exp_setup}
We extract \numRelations relations from the YAGO knowledge graph \cite{yago2007}, such as \emph{alumniOf}, \emph{capital}, and \emph{highestPoint}. 
For each relation, we do the following: 

\begin{itemize}
    \item We randomly sample $k$ real entities (and corresponding answers) from YAGO and use GPT-4 \footnote{\texttt{gpt-4-1106-preview}, January 2024} \citep{openai_gpt4_2023} to generate $k$ fake entities with the same entity class as the real ones\footnote{ Entity classes: \emph{CreativeWork, Event, Intangible, Organization, Person, Place, Product, Taxon,} and \emph{FictionalEntity}.}.\footnote{Real example: \entityexample{Adele}. Fake example: \entityexample{Udo König}.}
    \item We construct open and closed query form templates, e.g., (closed) \querytext{Q: Is \{answer\} the capital of \entityexample{\{entity\}}?\textbackslash nA:} and (open) \querytext{Q: What is the capital of \entityexample{\{entity\}}?\textbackslash nA:}, and parameterize them with both real and fake entities (and answers, if applicable), leaving us with $2k$ queries per query form.
    \item We construct context templates of 3 types: base, e.g., \contexttext{The capital of \{entity\} is \{answer\}.}, assertive, e.g., \contexttext{The capital of \{entity\} is definitely \{answer\}.}, and negation (e.g., \contexttext{The capital of \{entity\} is not \{answer\}.}).
    We parameterize these context templates with both real and fake entities (and answers, if applicable).
    From this, we randomly sample $6k$ contexts, subject to the constraint that each entity is directly mentioned in 6 contexts total (that is, in 2 assertive contexts, 2 base contexts, and 2 negation contexts).
    \item We compute the persuasion scores $\pscoresymb(\context, \query(\entity))$ for the real and fake entities according to \Cref{eq:context_pmi}.
    \item We compute the susceptibility score $\stickyscoresymb(\query(\entity))$ for the real and fake entities according to \Cref{eq:sus_mi}.
    We approximate $\rvContext$ with a uniform distribution over the set of sampled contexts and $\rvAnswer$ with the model's next token probabilities.
    \item For the various group comparisons (e.g., relevant vs irrelevant contexts, familiar vs unfamiliar entities, etc.), we use a permutation test over the t-statistic ($\alpha=0.05$, with the BH correction) to test our null hypothesis for each comparison.
\end{itemize}

\section{Entity-Specific Memorization Ratio}
\label{app:mr}
\subsection{Definition}
The memorization ratio (MR), as used by \citet{longpre_entity-based_2022}, is defined as follows:
Given a set of (\querytext{query}, \contexttext{context})-pairs with knowledge conflicts (i.e., the answer in the context disagrees with the original answer), $\text{MR} = \frac{p_o}{p_o + p_s}$, where $p_o$ is the number of queries for which the model returned the original answer and $p_s$ is the number of queries for which the model returned the substitute answer presented in the context.
Then, the entity-specific MR follows this definition under the additional constraint that the query and entity are fixed, and we vary only the contexts; the resulting number tells us, for a given query about an entity, the fraction of contexts for which the model returned the original answer instead of the substitute one.

\subsection{Further Discussion on Relation between MR and Susceptibility Score}
\label{app:convergent}
In \Cref{fig:sus_mr}, the open queries further show a decreasing pattern at each quartile for all bins except the lowest one (MR between 0 and 0.2). 
The lowest bin includes queries for which a model may fail to know the original answer; in these cases, MR cannot distinguish between the model behavior for these difficult queries, whereas susceptibility scores can provide more granular information about model behavior for such entities.
Meanwhile, the closed queries appear to have a mostly increasing pattern at each quartile across the bins. 
This result could be an artifact of the construction of the closed queries.
Since the original answer of all closed queries is \emph{Yes}, it is possible that the contexts increase the confidence in \emph{Yes} due to some model artifact or token bias, which would explain higher susceptibility scores even for higher MR.

\section{Persuasion Scores: In-Depth Results}
\label{app:p_score_in_depth}

\subsection{Relevant vs Irrelevant Context Persuasion Scores Across Models}
Our null hypothesis is that the mean persuasion score of relevant contexts is not greater than that of irrelevant contexts.
We summarize the test results (effect size and p-values) for all queries for all models in \Cref{fig:p_score_relevant_app}.
\begin{figure}
    \centering
    \includegraphics[width=0.9\columnwidth]{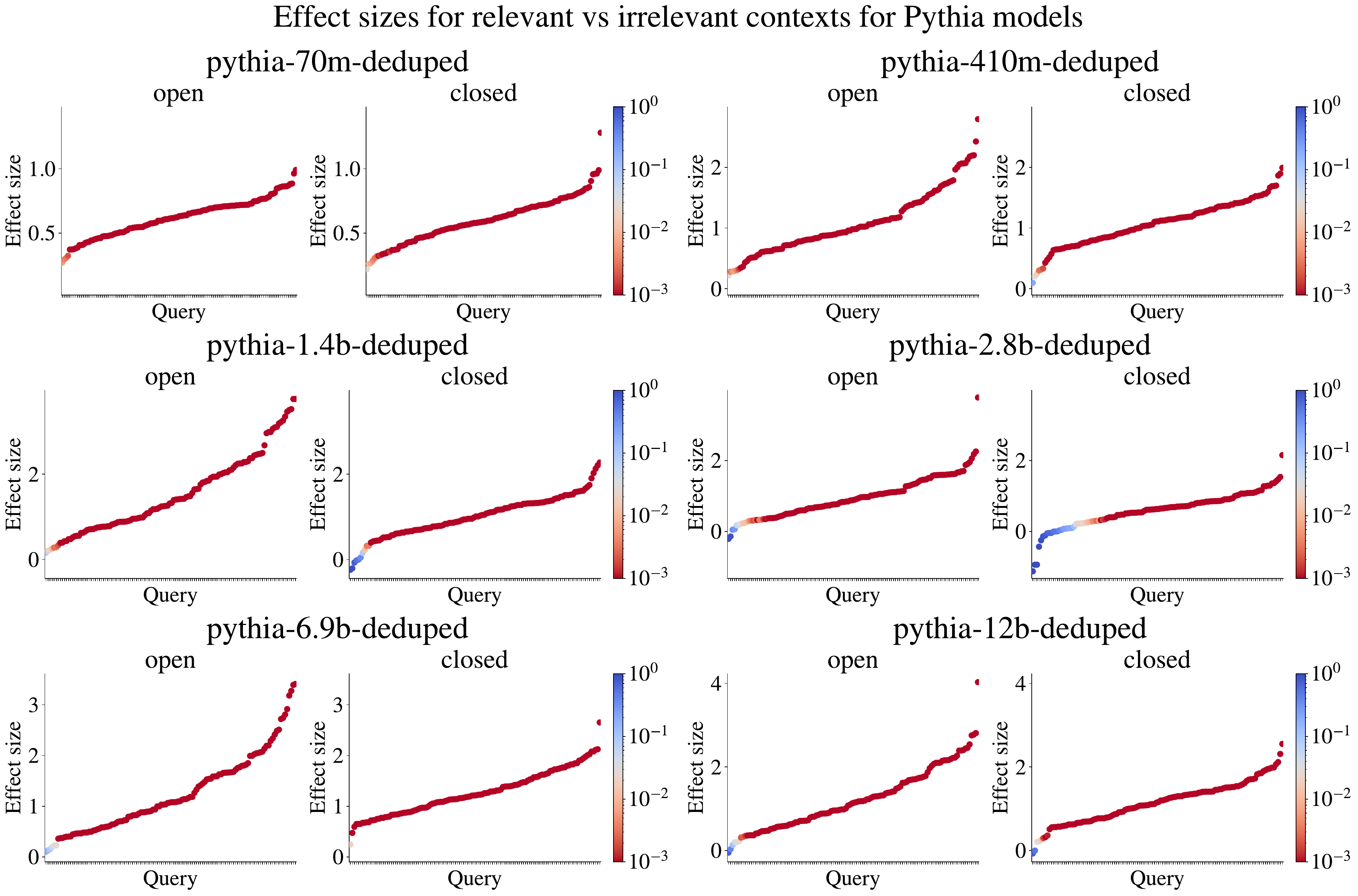}
    \caption{These plots show, for each of the 6 model sizes, the effect size between relevant and irrelevant contexts ($y$-axis) and p-values ({\color{MyRed}{\emph{red}}} is significant, {\color{MyInsignificantBlue}{\emph{blue}}} is insignificant) of the null hypothesis that persuasion scores of relevant contexts are not greater than those of irrelevant contexts, for each of the \numRelations queries ($x$-axis). Across a consistent result across all models of primarily positive effect sizes and mostly significant results.}
    \label{fig:p_score_relevant_app}
\end{figure}

\subsection{Assertive vs Base Context Persuasion Scores Across Models}
Our null hypothesis is that the mean persuasion score of assertive contexts is not greater than that of base contexts.
We summarize the test results (effect size and p-values) for all queries for all models in \Cref{fig:p_score_assertive_app}.
\begin{figure}
    \centering
    \includegraphics[width=0.9\columnwidth]{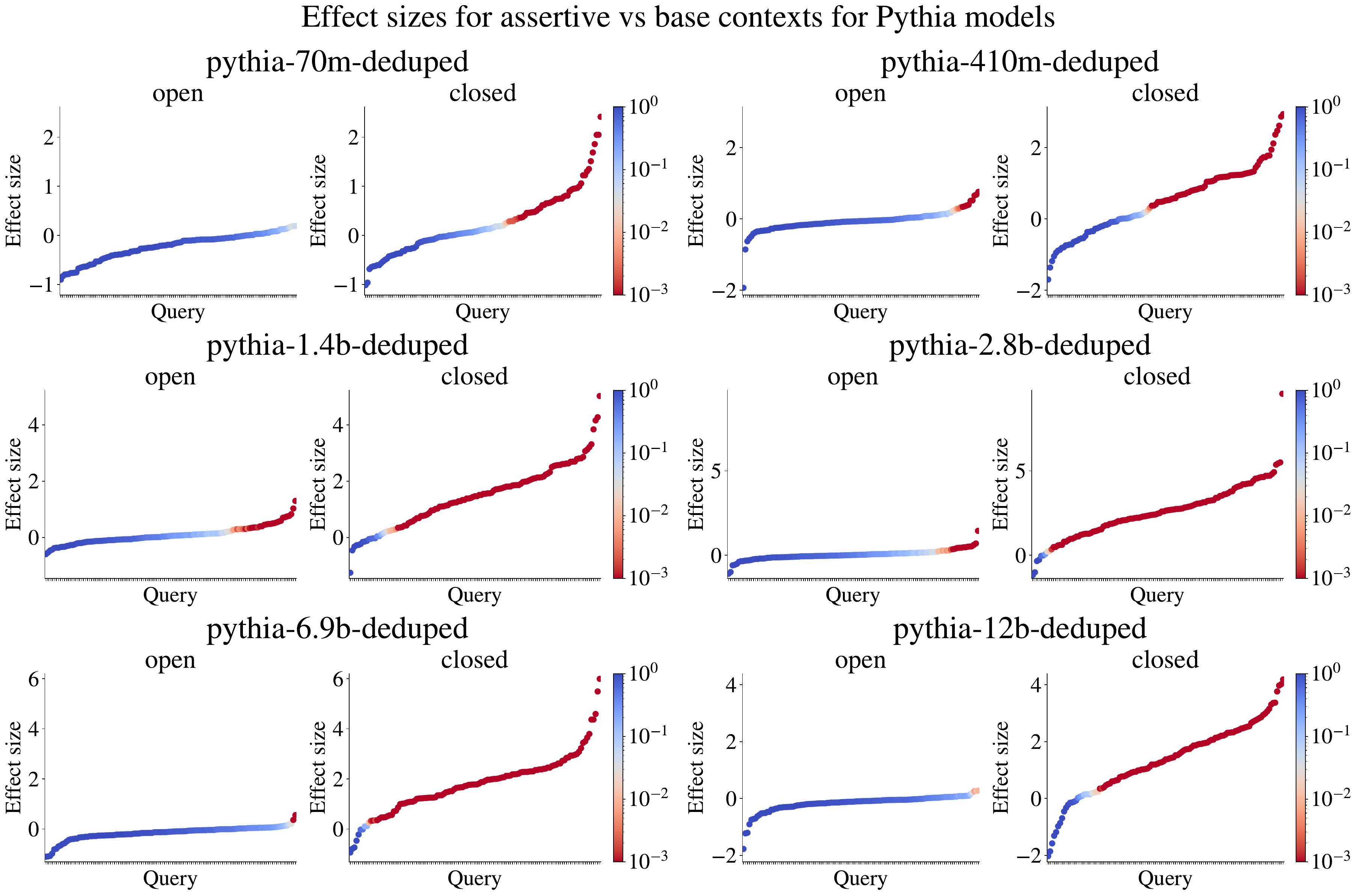}
    \caption{These plots show, for each of the 6 model sizes, the effect size between assertive and base contexts ($y$-axis) and p-values ({\color{MyRed}{\emph{red}}} is significant, {\color{MyInsignificantBlue}{\emph{blue}}} is insignificant) of the null hypothesis that persuasion scores of relevant contexts are not greater than those of irrelevant contexts, for each of the \numRelations queries ($x$-axis). Across a consistent result across all models of primarily positive effect sizes and mostly significant results.}
    \label{fig:p_score_assertive_app}
\end{figure}

\subsection{Negation vs Base Context Persuasion Scores Across Models}
Our null hypothesis is that the mean persuasion score of negation contexts is not equal to that of base contexts.
We summarize the test results (effect size and p-values) for all queries for all models in \Cref{fig:p_score_negation_app}.
\begin{figure}
    \centering
    \includegraphics[width=0.9\columnwidth]{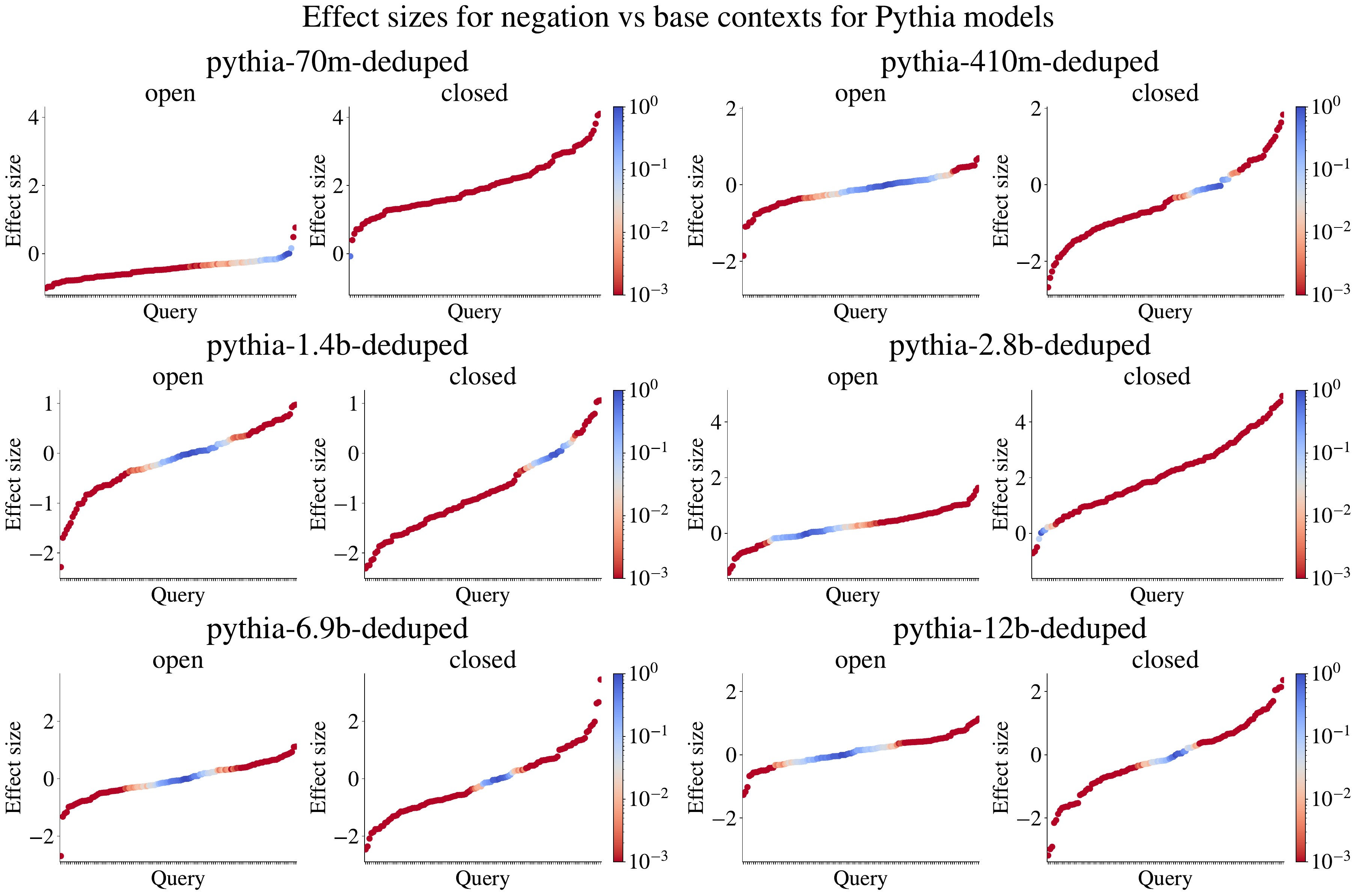}
    \caption{These plots show, for each of the 6 model sizes, the effect size between relevant and irrelevant contexts ($y$-axis) and p-values ({\color{MyRed}{\emph{red}}} is significant, {\color{MyInsignificantBlue}{\emph{blue}}} is insignificant) of the null hypothesis that persuasion scores of relevant contexts are not greater than those of irrelevant contexts, for each of the \numRelations queries ($x$-axis). Across a consistent result across all models of primarily positive effect sizes and mostly significant results.}
    \label{fig:p_score_negation_app}
\end{figure}

\section{Susceptibility Scores: In-Depth Results}

\subsection{Unfamiliar vs Familiar Entity Susceptibility Scores Across Models}
\label{app:effect_vs_frequency}

Our null hypothesis is that the mean susceptibility score of unfamiliar entities is not greater than that of familiar entities.
We summarize the test results (effect size and p-values) for all queries for all models in \Cref{fig:sus_score_familiar_app}.
From this figure, we can see the trend of how effect size and percentage of significant queries generally increase with model size, and notably the smallest model has no significant results for any query.
However, even the larger models do not exhibit significant differences in scores between unfamiliar and familiar entities for all queries.
To investigate the spread further, we plot the p-values and effect size against the entity frequencies in the Pile. The results are presented in \cref{fig:effect_vs_frequency}. There is a significant trend for the open queries against the frequency (spearman, $\rho$ is $-0.23$, $p<0.05$), showing that real entities tend to be less susceptible the more frequently they appear in the training data. The trend is not significant for the closed set.

\begin{figure}
    \centering
    \includegraphics[width=0.9\columnwidth]{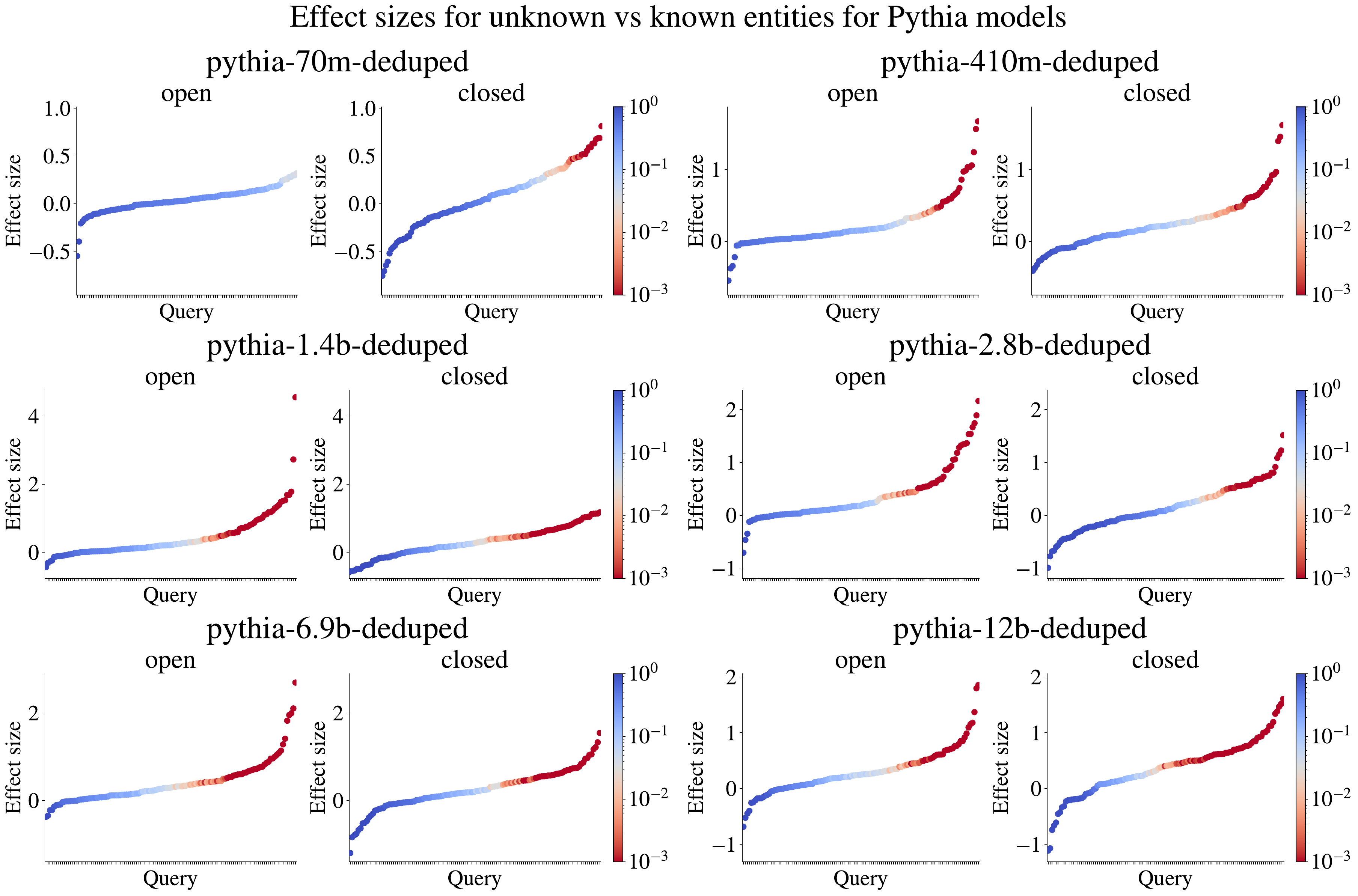}
    \caption{These plots show, for each of the 6 model sizes, the effect size between relevant and irrelevant contexts ($y$-axis) and p-values ({\color{MyRed}{\emph{red}}} is significant, {\color{MyInsignificantBlue}{\emph{blue}}} is insignificant) of the null hypothesis that persuasion scores of relevant contexts are not greater than those of irrelevant contexts, for each of the \numRelations queries ($x$-axis). Across a consistent result across all models of primarily positive effect sizes and mostly significant results.}
    \label{fig:sus_score_familiar_app}
\end{figure}

\begin{figure}
    \centering
    \includegraphics[width=0.5\columnwidth]{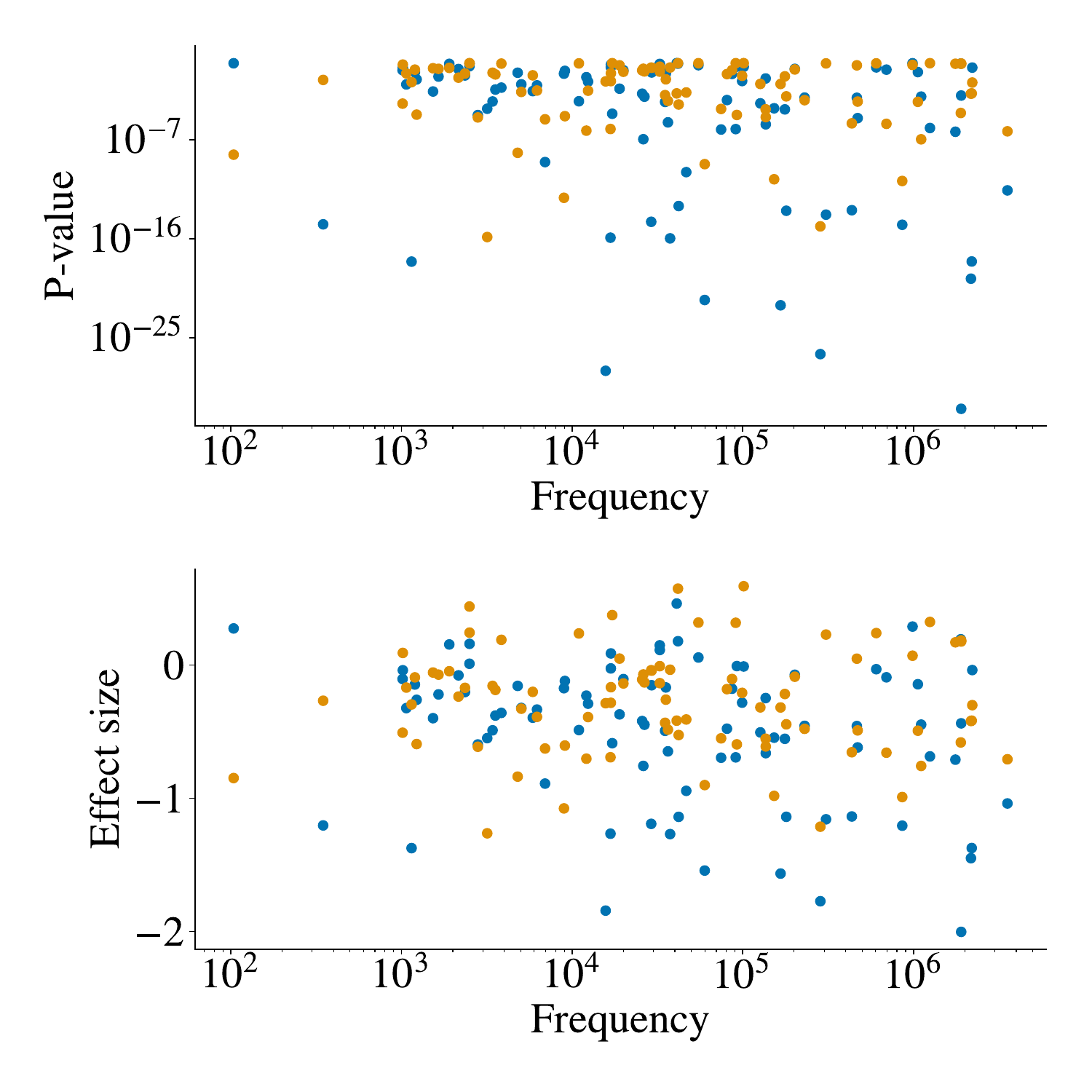}
    \caption{The significance of the difference in real/fake susceptibility correlates somewhat with the frequency of the real entities. Open queries are colored blue, and closed ones are orange. Spearman, $\rho$ for the open, is $-0.23$, $p<0.05$; the trend is not significant for the closed set.}
    \label{fig:effect_vs_frequency}
\end{figure}

\subsection{Training Data and Susceptibility Scores}
\label{app:suscept_freq_deg}

Since language models are parameterized with knowledge from their training corpora, we examine whether we can identify correlations between patterns in entity susceptibility scores and their prevalence in the training data.
Because the Pythia models are all trained on the Pile dataset \cite{pile} in a single pass, we choose to compare the susceptibility scores from the Pythia models to various frequency and co-occurrence statistics in the Pile.\footnote{We use the same deduplicated 825GiB version of the Pile that the Pythia-6.9b model was trained on. \url{https://huggingface.co/datasets/EleutherAI/the_pile_deduplicated}.}

\paragraph{Experiment Setup.} 
Our goal is to understand how the susceptibility score relates to the frequencies of entities and their co-occurrences in the training data. For this, we use all of the \emph{entities} and \emph{answers} selected as described in \Cref{sec:sus_pred_validity_real_vs_fake} and locate them in the Pile. 
We only perform rudimentary tokenization of each document by removing punctuation and splitting at white spaces, but find this suffices to locate the exact terms. As a sanity check, we annotate named entities in 30k documents and cross-reference the list of entities. If a supposed entity has a fairly high frequency (>50) and is most often (>75\%) not labeled as an entity, we exclude it from the calculations. This removes $\sim$200 entities that are high-frequency non-entity words in English and lowers the co-occurrences by 25\%. Finally, we calculate the token distance for each \emph{entity-answer} pair for every document in $\sim$1/3 of the Pile. %

\paragraph{Results.}
We compare the co-occurrences of the \emph{entity-answer} pairs to the averaged susceptibility scores over all queries $Q_\entity$ that apply to the entity $\entity$, $|Q_\entity|^{-1}\Sigma_{\query\in Q_\entity}\stickyscoresymb(\query(\entity))$. 
Our results show a stark difference in behavior between the \emph{open} and \emph{closed} questions (\Cref{sec:sus_pred_validity_real_vs_fake}).
The open questions not only have higher susceptibility scores, but the training corpus frequency of the mentioned entities influences them more. This is to be expected as there are far more probable candidates for open questions than for closed yes--no-style questions. We also find a significant negative correlation (Spearman $\rho$ -0.23, $p\simeq0$) between frequency and susceptibility scores for the pythia-6.9b-deduped model, indicating that the language model is less susceptible to context interference for entity-answer pairs that are more frequently found in the training corpus. 
See \cref{fig:sus_to_freq_max_dist_50} and \cref{fig:sus_freq_many}. 
Finally, we also notice a big difference in the rank correlation depending on the query type. Some query types are more susceptible to context for the given entities than others. An overview of this is given in the next section.

\begin{figure}   
\centering
     \includegraphics[width=0.45\textwidth]{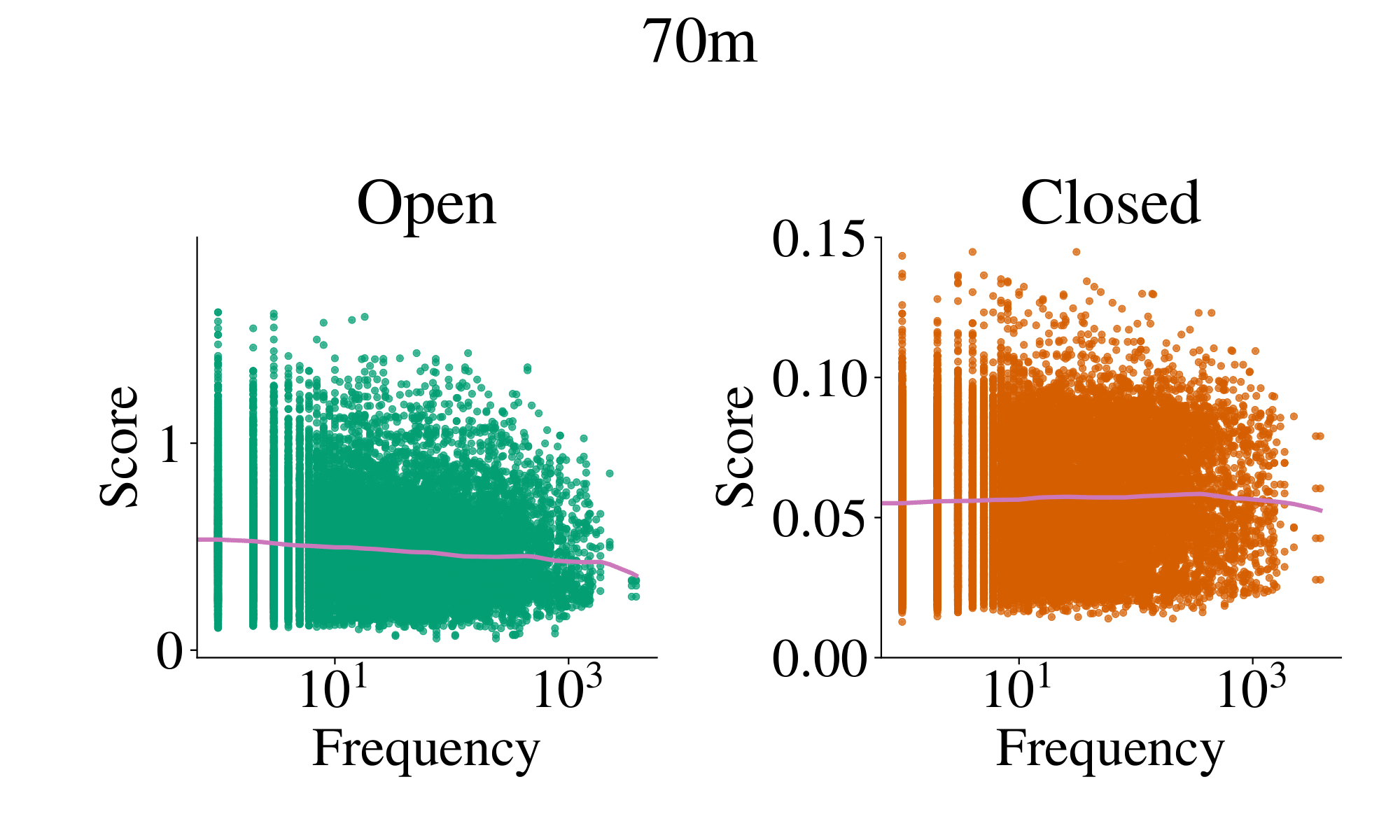}
       \includegraphics[width=0.45\textwidth]{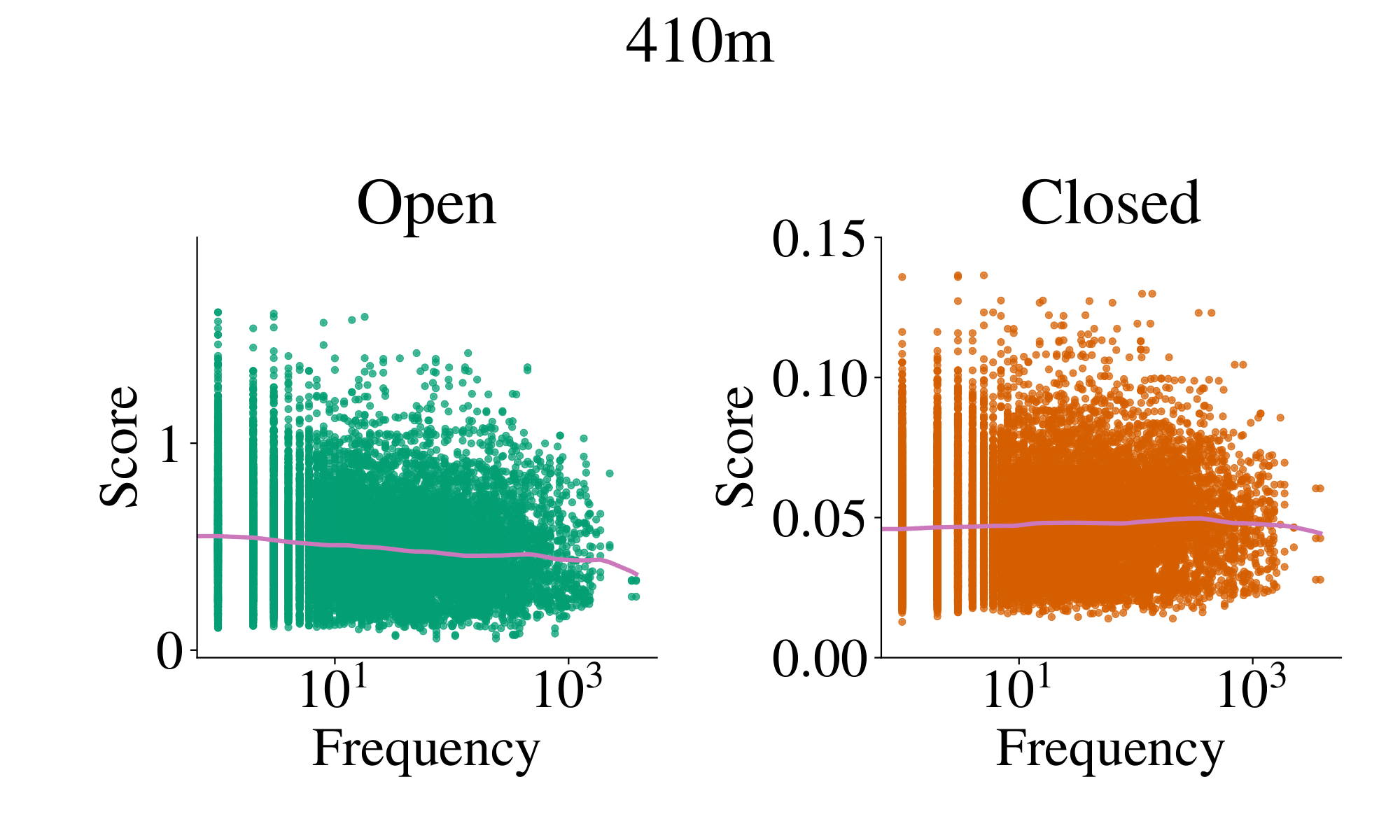}
    \includegraphics[width=0.45\textwidth]{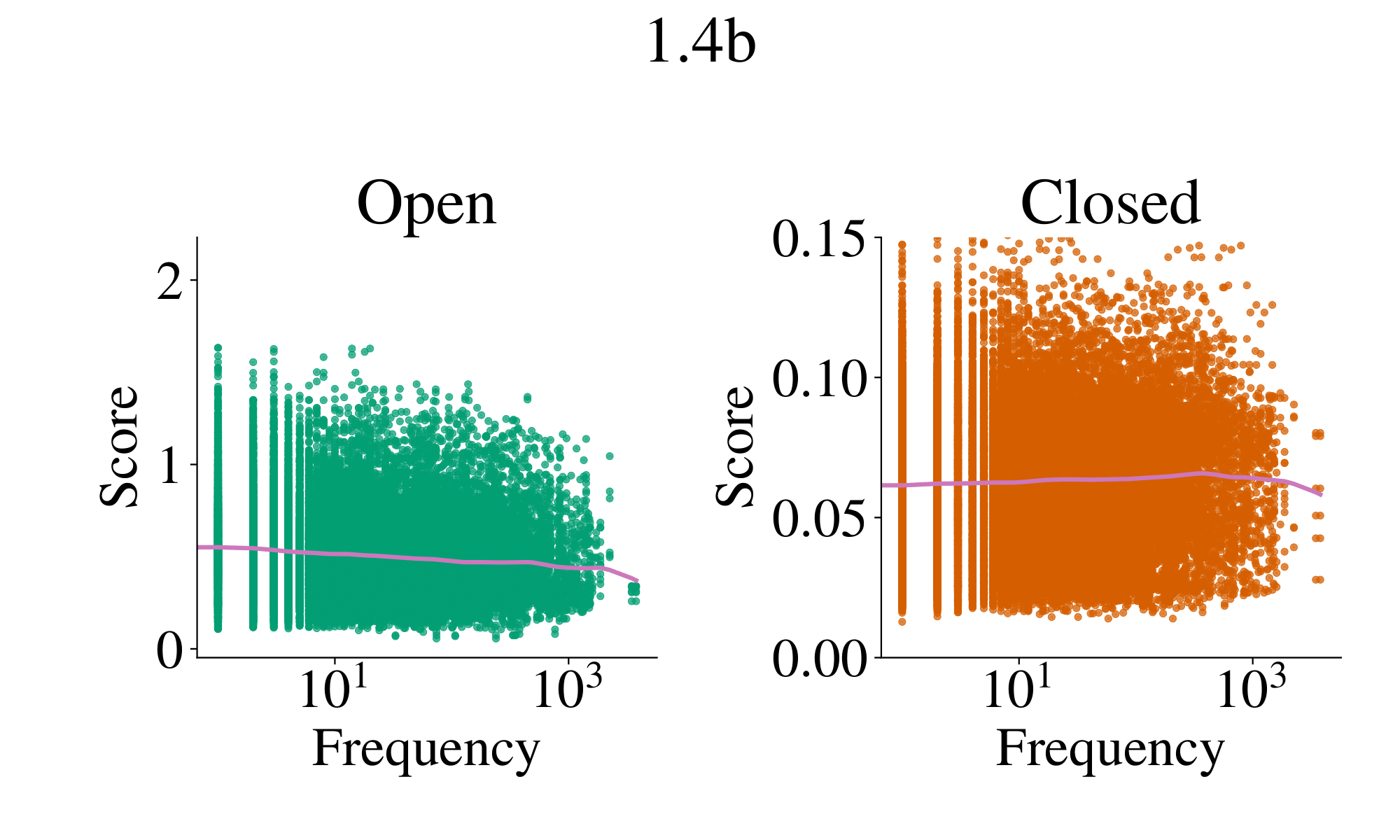}
    \includegraphics[width=0.45\textwidth]{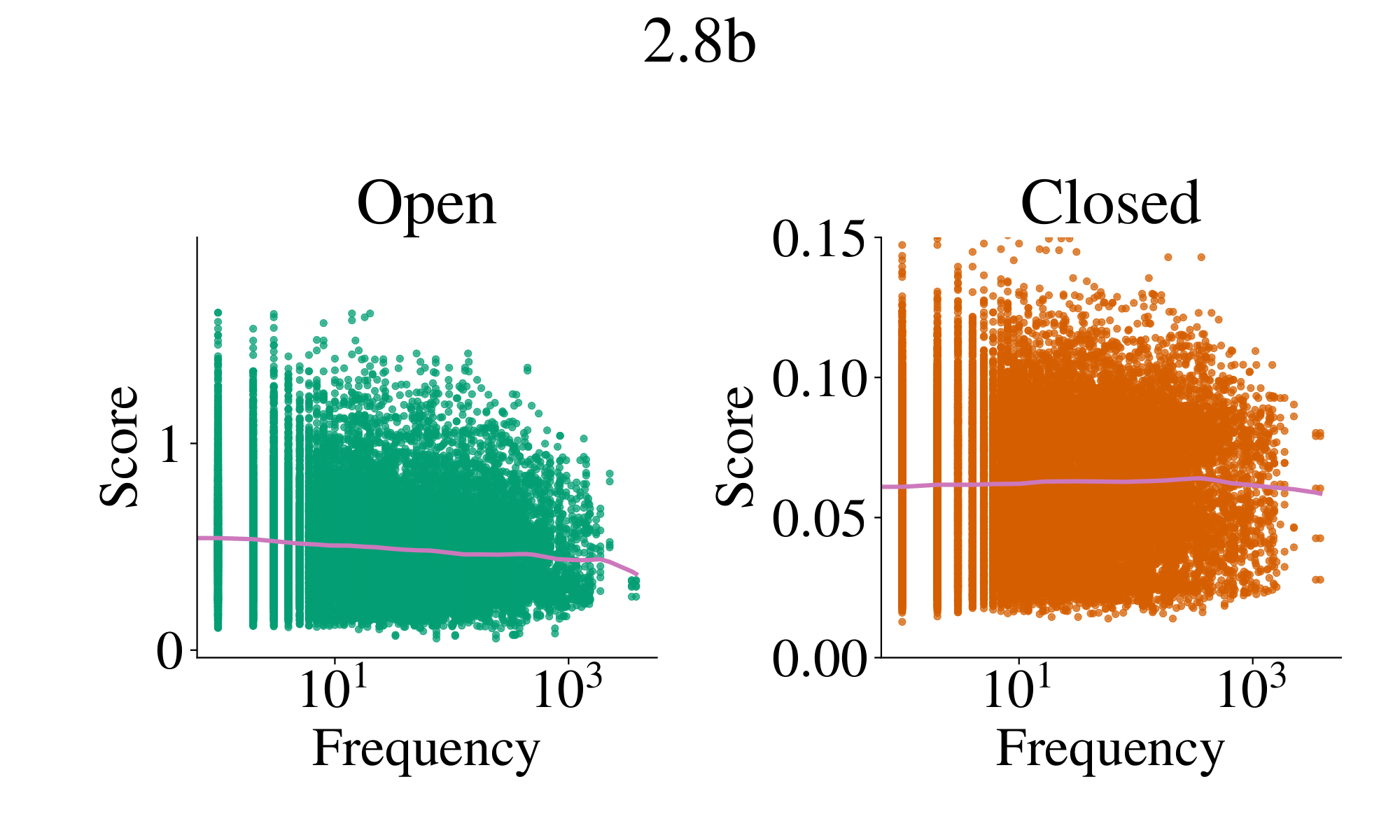}
    \includegraphics[width=0.45\textwidth]{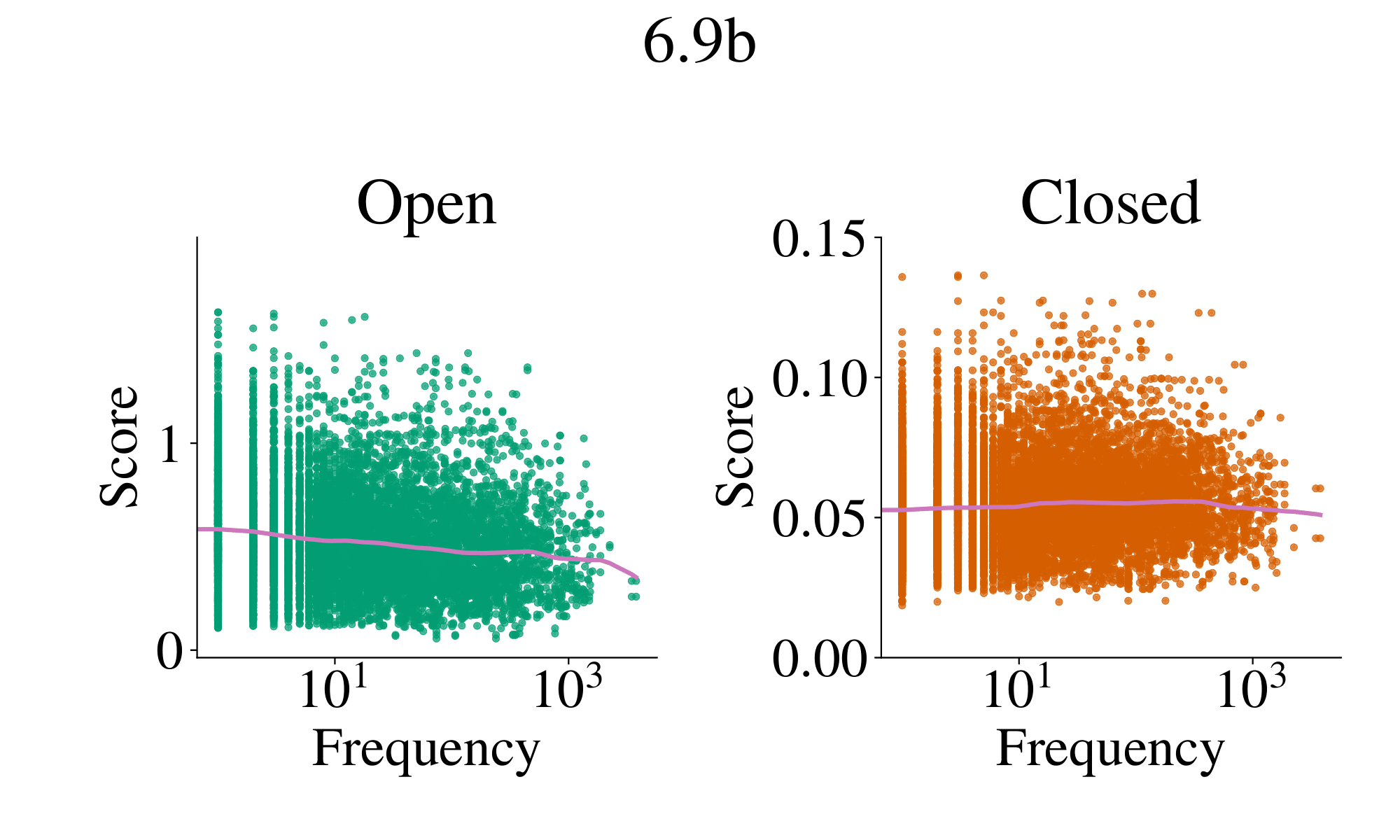}
    \includegraphics[width=0.45\textwidth]{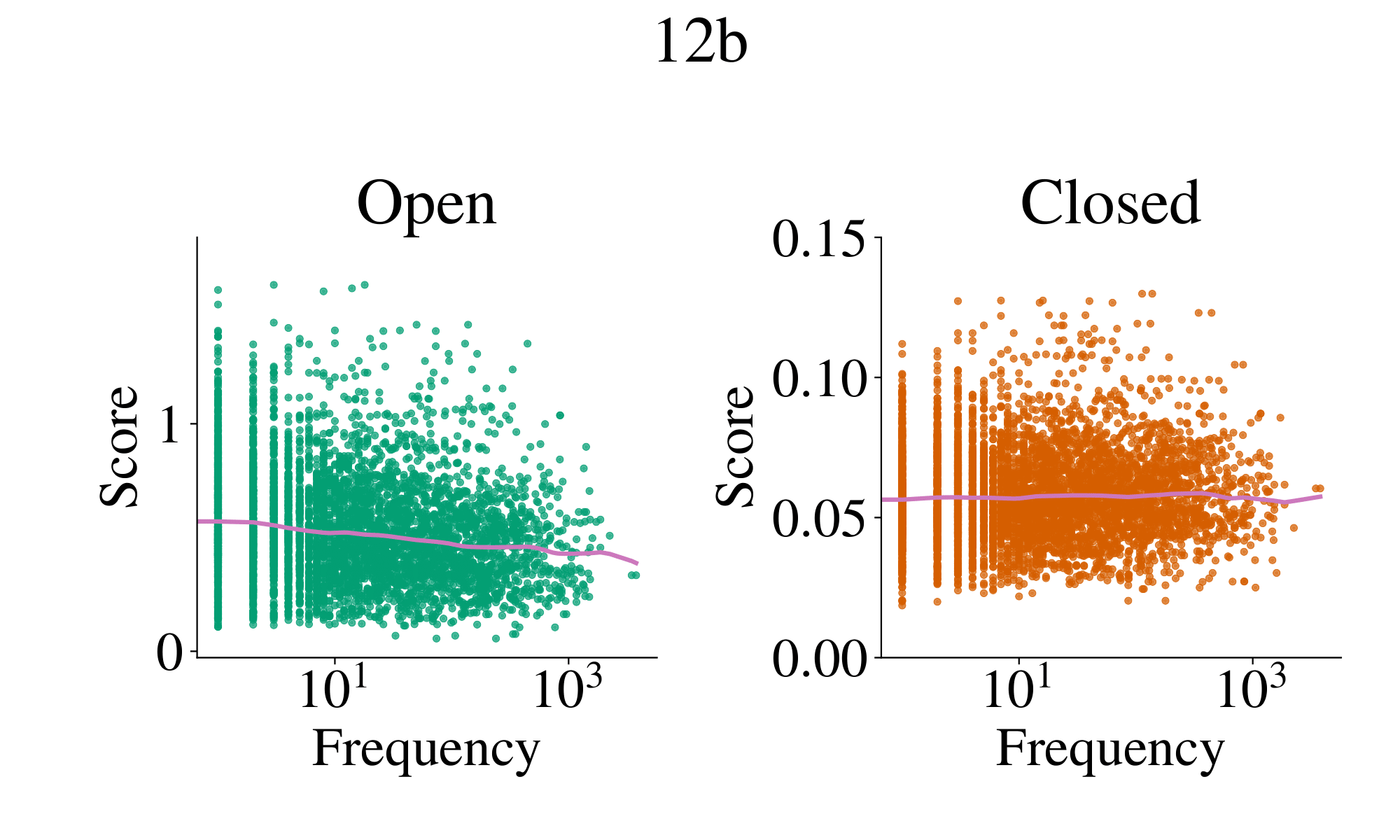}
    
    \caption{Susceptibility scores plotted against frequency for different model sizes.
    The decreasing lower bound trend is generally consistent across all models and both open/closed queries, although it appears to be stronger for open queries (especially at larger model sizes).
    }
    \label{fig:sus_freq_many}
\end{figure}

\begin{figure}
    \centering
    \includegraphics[width=0.9\columnwidth]{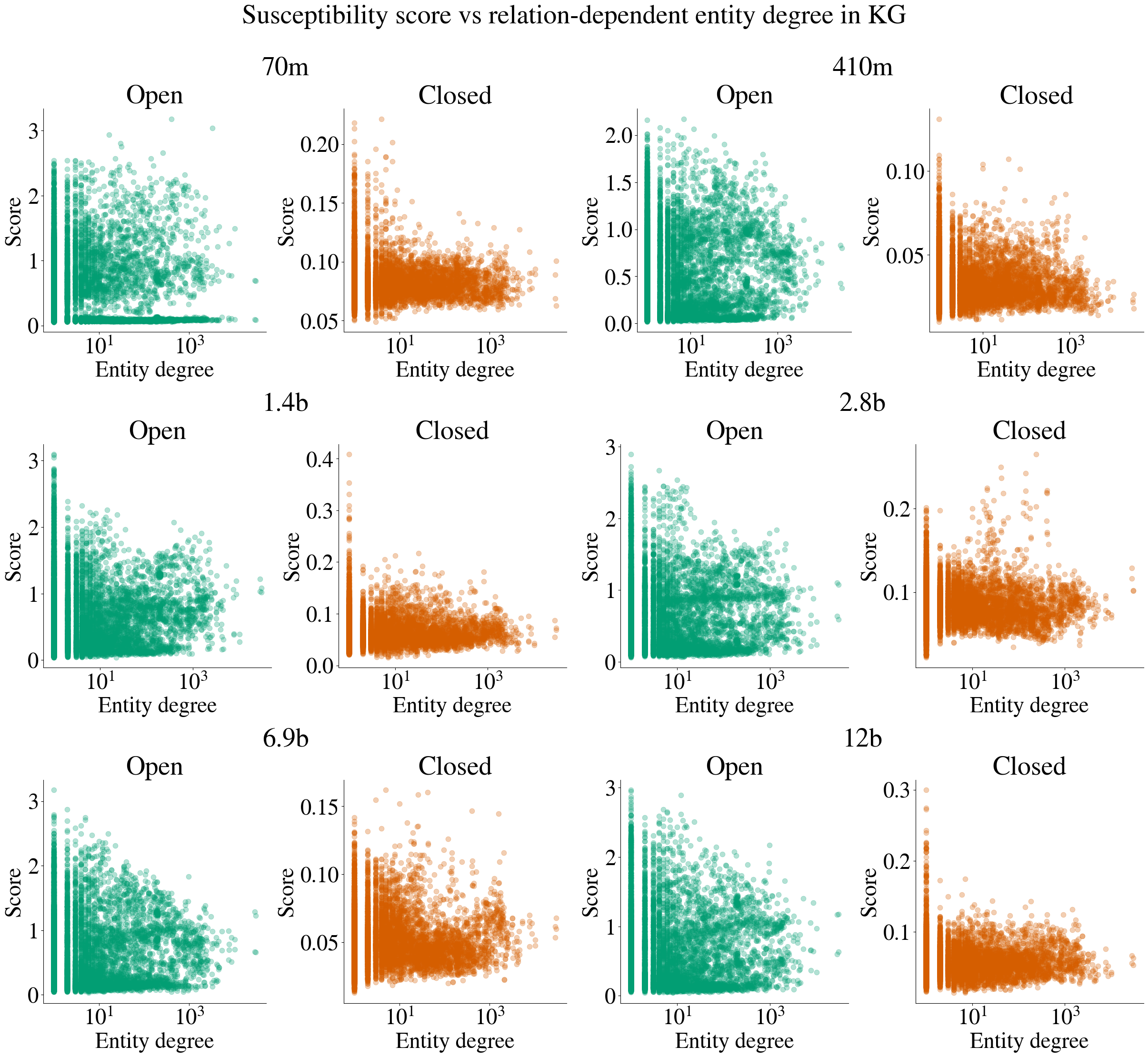}
    \caption{These plots show, for each of the 6 model sizes, the relationship between entity susceptibility score and relation-dependent degree in the knowledge graph. 
    The decreasing lower bound trend is generally consistent across all models and both open/closed queries.}
    \label{fig:sus_kg_degree_all_models}
\end{figure}

\paragraph{Susceptibility and Frequency Analysis}
\label{app:sus_freq}
We are interested in seeing how different predicate relations of the queries from the knowledge graph have different susceptibility scores. We evaluate the relation between susceptibility scores and co-occurrence frequencies per entity-answer predicate relation. This reveals trends in what types of relations are more susceptible to context than others. 
A lower correlation indicates a stronger drop in susceptibility as the entities become more frequent in the pretraining data.

\subsection{Entity Degrees and Susceptibility Scores}
\label{app:sus_pred_validity_yago}
Since knowledge graphs are structured conceptual maps relating entities, we further seek to identify whether we can identify any correlations between statistics of the YAGO knowledge graph and an entity's susceptibility scores.
Validating against a knowledge graph can be advantageous over validating against the actual training data for a number of reasons, including that for most models, the actual training data is inaccessible for research, and the scale of the training data can make it prohibitively expensive to trawl through efficiently and precisely.
For example, with a knowledge graph, we can identify the exact number of objects an entity might share an \emph{alumniOf} relation with, while within the pretraining data, it is very difficult to identify the number of different answers an entity will co-occur within the context of the specific \emph{alumniOf} relation.

\paragraph{Experiment Setup.}
Our goal is to understand how the susceptibility score relates to the degree of entities in the YAGO knowledge graph $\mathcal{G}$ for specific queries. 
For this, we extract the number of incoming and outgoing edges from an entity $\entity$ along a relation (or query) $\query$ as follows: 
$\delta(\entity, \query) = \left|\left\{\answer \mid (\entity, \query, \answer) \in \mathcal{G}\right\} \cup \left\{\answer \mid (\answer, \query, \entity) \in \mathcal{G}\right\}\right|$.
We plot $\delta(\entity, \query)$ against the susceptibility score $\stickyscoresymb(\query(\entity))$ for all entities and queries.

\paragraph{Results.}
From \Cref{fig:sus_kg_degree_all_models}, we see a decreasing upper bound relationship between susceptibility scores and the YAGO degree $\delta$ for both open and closed queries for all model sizes. 
This could be explained as follows: consistent with our original hypothesis, very familiar entities to a model have low susceptibility, while less familiar entities can have a wider range of susceptibility. 
The potential for unfamiliar entities to be susceptible is much higher than that for very familiar entities, although unfamiliar entities can also be less susceptible. 
Further investigation into traits that characterize the susceptibility of familiar vs less familiar entities is needed.

\section{Applications}
\label{app:apps}
\subsection{Social Sciences Measurement}
\paragraph{Motivation.} 
Large language models (LLMs) are actively used today in empirical social sciences for annotating data and descriptive data analysis (e.g., classifying tweets with sentiment and ideology scores \citep{ziems_can_2022, gilardi_chatgpt_2023}).
However, \citet{zhang_sentiment_2023} warn that LLMs applied to sentiment classification ``may inadvertently adopt human biases'' and demonstrate that the prompt design, i.e., the context persuasiveness, can significantly influence the outcome. 
\citet{ohagan_measurement_2023} demonstrate that LLMs exhibit biases about different entities when measuring political ideology, based on their prior knowledge.
Finally, \citet{stoehr_unsupervised_2024} use LLMs to measure the stance of product reviews, their setting does not disentangle the effects of prior knowledge and context, thus leaving ambiguous the question of whether the measurement is more because of the LLM's prior bias about the product or the review's actual content.

\paragraph{Experiment Setup.} We consider a manually constructed dataset (\Cref{tab:friend--enemy}) of well-known entity-pairs which are either friends (e.g., Harry Potter and Ron Weasley) or enemies (e.g., David and Goliath) and contexts relating the pair (e.g., \contexttext{Harry loves Ron}).
We aim to understand how susceptibility scores may differ between the two kinds of relationships for the query \querytext{What's the relationship between \entityexample{\{entity1\}} and \entityexample{\{entity2\}}?}, e.g., are famous friend-based relationships more susceptible than enemy-based relationships?
We compute the susceptibility scores for these entity-pairs using simple template-generated contexts such as \contexttext{ \{entity1\} loves \{entity2\}} and \contexttext{ \{entity1\} hates \{entity2\}} and then analyze the results.

\paragraph{Results.}

From \Cref{fig:friend--enemy}, we can see that for the open query \querytext{The relationship between \entityexample{\{\}} and \entityexample{\{\}} is}, there is a clear difference in susceptibility scores for friend and enemy entity-pairs.
We leave to future work investigating the exact nature of why some entity-pairs may have lower susceptibility scores than others.

\subsection{Exploring Gender Bias}
\label{app:gender_bias}

\paragraph{Motivation.} The susceptibility score tells us how the entropy changes for a given entity as we vary the prepended contexts. We posit that stronger induced biases for entities lead to lower susceptibility scores than for those less represented in the training data. Using this information, we consider how the susceptibility score can be used to study gender bias in LLMs.

\paragraph{Experiment Setup.} We collect sentences containing 20 highly stereotypical sentences about men and women, respectively, to use as contexts. For instance, \contexttext{Men are not good at multitasking.} We also collect 20 names for each gender and 20 neutral queries that we think may be affected by the contexts, such as \querytext{\entityexample{\{entity\}} spends most of their time}. Finally, we also invert the queries by flipping the gendered words, giving us non-stereotypical contexts. We then calculate susceptibility scores for each query and gender combination over the four different context collections. All scores are calculated using the \texttt{Pythia-6.9b-deduplicated} model. All data was generated with GPT-4. The contexts, names, and queries can be found below.

\paragraph{Results.}
Results are shown in \Cref{fig:gender-bias}. For all contexts, there is a significant difference between the genders ($p<10^{-6}$). We make several observations:
(i) When prompted with the masculine stereotypes, we get the lowest susceptibility for the male entities, indicating the model has a bias for the stereotypical male contexts.
(ii) By swapping the genders in the stereotypical female contexts (M*), we get the highest susceptibility for the male entities. Somewhat surprisingly, we also find that this gives the lowest susceptibility scores for the female names.
(iii) The susceptibility scores for the female names over the stereotypical female contexts (F) are similar to the swapped ones (F*), hinting that the model does not have as strong a bias for the feminine stereotypes as in the masculine case in (i).
Combined with our prior findings on the relation between susceptibility and the saliency of entities, this hints at masculine stereotypes being more prominently represented in the training data (as well as in the model's induced biases). At the same time, we don't see evidence of exaggerated feminine stereotypes being favored via lower susceptibility scores.

\subsubsection{Gender Stereotypes Data}
\label{app:gender_bias_data}
Here, we list the data used to run the gender bias use case. All data was generated with the help of the GPT-4 LLM.

The female names are \entityexample{Mary, Jennifer, Linda, Patricia, Elizabeth, Susan, Jessica, Sarah, Karen, Nancy, Margaret, Lisa, Betty, Dorothy, Sandra, Ashley, Kimberly, Donna, Emily, Michelle}. The male names are \entityexample{James, John, Robert, Michael, William, David, Richard, Joseph, Thomas, Charles, Christopher, Daniel, Matthew, George, Donald, Anthony, Paul, Mark, Andrew, Edward}. 

The stereotypical feminine contexts are: \contexttext{Women are not as strong as men., All women are overly emotional and sensitive., Women are not good at math and sciences., All women are naturally good at cooking and cleaning., Women talk more than men., Women are generally not capable of doing manual labor., All women love shopping and fashion., Women don't understand technology as well as men do., Women are indecisive and change their mind too often., Women cannot drive as well as men., Women are not as ambitious as men when it comes to career., Women overthink things more than men., Women are not as good at sports as men., Women are more gossipy than men., All women are afraid of bugs and spiders., Women are not good at directions and get lost easily., Women take longer to get ready than men., Women always need to ask for others for their opinion., All women are obsessed with their weight and dieting., All women are crazy about babies and want to have kids.}

The stereotypical masculine contexts are \contexttext{Men always bring the association of strength and aggression., All men are inherently messy and cannot keep their spaces clean., Men are not good at multitasking., All men are naturally good at sports., Men cannot cook as well as women., Men do not take care of their health as much as women do., I believe men are not as talkative as women., All men love video games., Men are not good listeners., Men are always competitive., Men are not as emotional as women., Men are not able to express their feelings as well as women do., Men only think about one thing., Men look at problems from a logical perspective, not considering emotions., Men are typically less organized than women., Men tend to make decisions without considering others' feelings., All men only care about sports and cars., Men don't like to ask for help., Men don't pay as much attention to appearance as women do., All men can't resist a flirt.}

The queries used for both sets of entities are: \querytext{I think \entityexample{\{entity\}} works as a}, \querytext{\entityexample{\{entity\}} is good at}, \querytext{I met \entityexample{\{entity\}} yesterday, they are}, \querytext{\entityexample{\{entity\}} usually spends their free time}, \querytext{\entityexample{\{entity\}}'s favorite hobby is likely}, \querytext{\entityexample{\{entity\}} probably lives in a}, \querytext{When it comes to \entityexample{\{entity\}}'s fashion style, they prefer}, \querytext{\entityexample{\{entity\}}'s educational background might be in}, \querytext{\entityexample{\{entity\}} often enjoys eating}, \querytext{If \entityexample{\{entity\}} was a colour, they would be}, \querytext{The one thing that \entityexample{\{entity\}} cannot resist is}, \querytext{\entityexample{\{entity\}} probably drives a}, \querytext{One thing that \entityexample{\{entity\}} is really bad at is}, \querytext{\entityexample{\{entity\}}'s personality can be described as}, \querytext{\entityexample{\{entity\}} spends most of their time}, \querytext{\entityexample{\{entity\}} probably likes to watch}, \querytext{During weekends, \entityexample{\{entity\}} might be}, \querytext{\entityexample{\{entity\}}'s favorite type of music is probably}, \querytext{I imagine \entityexample{\{entity\}} has a pet, it is a}, \querytext{\entityexample{\{entity\}}'s favorite type of movie is probably}.

\begin{table}
\tiny
\centering
\begin{tabular}{@{}llll@{}}
\toprule
Entity 1              & Entity 2                    & Label  & Type     \\ \midrule
Sherlock Holmes       & James Moriarty              & enemy  & movie    \\
Obi Wan Kenobi        & Darth Maul                  & enemy  & movie    \\
Harry Potter          & Lord Voldemort              & enemy  & movie    \\
Harry Potter          & Draco Malfoy                & enemy  & movie    \\
Spiderman             & Norman Osborne              & enemy  & movie    \\
Super Mario           & Bowser                      & enemy  & movie    \\
Gandalf               & Saruman                     & enemy  & movie    \\
Bilbo Baggins         & Sauron                      & enemy  & movie    \\
Superman              & Lex Luthor                  & enemy  & movie    \\
James Bond            & Ernst Stavro Blofeld        & enemy  & movie    \\
Optimus Prime         & Megatron                    & enemy  & movie    \\
Boston Red Sox        & New York Yankees            & enemy  & sports   \\
Green Bay Packers     & Chicago Bears               & enemy  & sports   \\
Borussia Dortmund     & FC Bayern Munich            & enemy  & sports   \\
Real Madrid           & FC Barcelona                & enemy  & sports   \\
Joe Frazier           & Muhammad Ali                & enemy  & sports   \\
AC Milan              & Inter Milan                 & enemy  & sports   \\
Torries               & Labor Party                 & enemy  & politics \\
Democrats             & Republicans                 & enemy  & politics \\
USA                   & Al-Qaeda                    & enemy  & politics \\
Donald Trump          & Hillary Clinton             & enemy  & politics \\
Donald Trump          & Joe Biden                   & enemy  & politics \\
Kuomintang            & Chinese Communist Party     & enemy  & history  \\
Winston Churchill     & Adolf Hitler                & enemy  & history  \\
Harry Trumann         & Nikita Khrushchev           & enemy  & history  \\
George Bush           & Saddam Hussein              & enemy  & history  \\
David                 & Goliath                     & enemy  & history  \\
Greece                & Troy                        & enemy  & history  \\
Gauls                 & Rome                        & enemy  & history  \\
USA                   & Soviet Union                & enemy  & history  \\
Nazi Germany          & Allied Forces               & enemy  & history  \\
Cain                  & Abel                        & enemy  & history  \\
Coca Cola             & Pepsi                       & enemy  & business \\
Ford                  & General Motors              & enemy  & business \\
Thomas Edison         & Nikola Tesla                & enemy  & business \\
Steve Jobs            & Bill Gates                  & enemy  & business \\
Airbus                & Boeing                      & enemy  & business \\
McDonalds             & Burger King                 & enemy  & business \\
Visa                  & Mastercard                  & enemy  & business \\
Netscape              & Microsoft Internet Explorer & enemy  & business \\
UPS                   & Fedex                       & enemy  & business \\
Canon                 & Nixon                       & enemy  & business \\
Sony                  & Nintendo                    & enemy  & business \\
Sheriff of Nottingham & Robin Hood                  & enemy  & history  \\
Moby Dick             & Captain Ahab                & enemy  & movie    \\
Tom                   & Jerry                       & enemy  & movie    \\
Peter Pan             & Captain Hook                & enemy  & movie    \\
Jack Sparrow          & Hector Barbossa             & enemy  & movie    \\
Harry Potter          & Ronald Weasley              & friend & movie    \\
John Lennon           & Paul McCartney              & friend & history  \\
Amelia Earhart        & Eleanor Roosevelt           & friend & history  \\
Georges Braque        & Pablo Picasso               & friend & history  \\
Bilbo Baggins         & Gandalf                     & friend & movie    \\
Harry Potter          & Hermione Granger            & friend & movie    \\
Frodo Baggins         & Samwise Gamgee              & friend & movie    \\
Han Solo              & Chewbacca                   & friend & movie    \\
C.S. Lewis            & J.R.R. Tolkien              & friend & history  \\
Alexander the Great   & Hephaestion                 & friend & history  \\
Mark Twain            & Nikola Tesla                & friend & history  \\
John Adams            & Thomas Jefferson            & friend & history  \\
Bill Gates            & Warren Buffett              & friend & business \\
Vincent van Gogh      & Paul Gauguin                & friend & history  \\
Albert Einstein       & Niels Bohr                  & friend & history  \\
Woody                 & Buzz Lightyear              & friend & movie    \\
Shrek                 & Donkey                      & friend & movie    \\
Bal Gangadhar Tilak   & Mohammed Ali Jinnah         & friend & history  \\
Marc Twain            & Hellen Keller               & friend & history  \\
Thomas Edison         & Henry Ford                  & friend & history  \\
Bill Gates            & Paul Allen                  & friend & business \\
Larry Page            & Sergei Brin                 & friend & business \\
Mike Wazowski         & James P. Sullivan           & friend & movie    \\
Sherlock Holmes       & John Watson                 & friend & movie    \\
Harry Potter          & Albus Dumbledore            & friend & movie    \\ \bottomrule
\end{tabular}
\caption{The manually constructed friend--enemy dataset, which consists of entity pairs, whether their relationship is friend-based or enemy-based, and the type of their relationship, e.g., movie, history, etc.}
\label{tab:friend--enemy}
\end{table}

\end{document}